\def\eqref#1{equation~\ref{#1}}
\def\1{\bm{1}}
\DeclareMathAlphabet{\mathsfit}{\encodingdefault}{\sfdefault}{m}{sl}
\SetMathAlphabet{\mathsfit}{bold}{\encodingdefault}{\sfdefault}{bx}{n}
\newcommand{\R}{\mathbb{R}}
\newcommand{\method}[2]{#1{\tiny$_\text{#2}$}}
\newtheorem{theorem}{Theorem}[section] 
\newtheorem{proposition}[theorem]{Proposition}
\newtheorem{corollary}[theorem]{Corollary}
\newacronym{cfd}{CFD}{computational fluid dynamics}
\newacronym{ddpm}{DDPM}{denoising diffusion probabilistic model}
\newacronym{mlp}{MLP}{multilayer perceptron}
\newacronym{cnn}{CNN}{convolutional neural network}
\newacronym{gnn}{GNN}{graph neural network}
\newacronym{no}{NO}{Neural Operator}
\newacronym{fno}{FNO}{Fourier Neural Operator}
\newacronym{fft}{FFT}{Fast Fourier Transform}
\newacronym{ifft}{iFFT}{inverse fast Fourier transform}
\newacronym{pde}{PDE}{Partial Differential Equation}
\newacronym{pinn}{PINN}{physics-informed neural network}
\newacronym{ode}{ODE}{Ordinary Differential Equation}
\newacronym{ae}{AE}{autoencoder}
\newacronym{gan}{GAN}{generative adversarial model}
\newacronym{cnf}{CNF}{Continuous Normalizing Flow}
\newacronym{slp}{SLP}{stochastic linear path}
\newacronym{affine}{Affine-OT}{affine optimal transport}
\newacronym{river}{RIVER}{Random frame conditioned flow Integration for VidEo pRediction}
\newacronym{ve}{VE-diff}{variance exploding diffusion}
\newacronym{vp}{VP-diff}{variance preserving diffusion}
\newacronym{tempo}{TempO}{Temporal Operator flow matching}
\newacronym{vit}{ViT}{Vision Transformer}
\newacronym{swe}{SWE}{shallow water equation}
\newacronym{rd}{RD-2D}{2D reaction diffusion}
\newacronym{nsv}{NS-$\omega$}{2D incompressible Navier-Stokes vorticity}
\newacronym{ns2c}{NS-$v$}{2D incompressible Navier-Stokes velocity}
\newacronym{nfe}{NFE}{number of function evaluations}
\title{Operator Flow Matching for Timeseries Forecasting}
\author{Yolanne Yi Ran Lee \\
Department of Computer Science\\
University College London\\
Gower Street, London WC1E 6BT, UK \\
\texttt{yolanne.lee.19@ucl.ac.uk} \\
\And
Kyriakos Flouris \\
MRC Biostatistics Unit \\
University of Cambridge \\
Forvie, Robinson Way, Cambridge CB2 0SR, UK \\
\texttt{kf318@cam.ac.uk}
}
\begin{document}


\maketitle

\begin{abstract}
Forecasting high-dimensional, PDE-governed dynamics remains a core challenge for generative modeling. Existing autoregressive and diffusion-based approaches often suffer cumulative errors and discretisation artifacts that limit long, physically consistent forecasts. Flow matching offers a natural alternative, enabling efficient, deterministic sampling. We prove an upper bound on FNO approximation error and propose TempO, a latent flow matching model leveraging sparse conditioning with channel folding to efficiently process 3D spatiotemporal fields using time-conditioned Fourier layers to capture multi-scale modes with high fidelity. TempO outperforms state-of-the-art baselines across three benchmark PDE datasets, and spectral analysis further demonstrates superior recovery of multi-scale dynamics, while efficiency studies highlight its parameter- and memory-light design compared to attention-based or convolutional regressors.
\end{abstract}

\section{Introduction}

Generative artificial intelligence has brought unparalleled creative and scientific potential, with models capable of producing images~\citep{hatamizadehDiffiTDiffusionVision2025}, video~\citep{bar-talLumiereSpaceTimeDiffusion2024}, audio~\citep{juNaturalSpeech3ZeroShot2024}, and text~\citep{grattafioriLlama3Herd2024} that rival human quality. From autoregressive transformers to diffusion models and energy-based approaches, the landscape of generative AI is rich and diverse, offering multiple pathways to model complex data distributions. At the core of this revolution are probabilistic generative models, which learn to sample from complex, high-dimensional distributions~\citep{flourisCanonicalNormalizingFlows2023}. Among these, flow matching models have emerged as a class of generative models which learn to transform a simple prior distribution to a more complex data distribution as a continuous transformation. This direct, simulation-free approach enables both efficiency and precision, offering a new lens on modeling complex systems~\citep{lipmanFlowMatchingGenerative2023}.

Despite recent advances, forecasting high-dimensional temporal dynamics remains challenging~\citep{leeAutoregressiveRenaissanceNeural2023a}. Deep learning models are computationally expensive and often fail catastrophically after a few dozen timesteps due to compounding errors in autoregressive predictions~\citep{ansariChronosLearningLanguage2024}. Even with the advent of large language models and their remarkable ability to generate, models that attempt to leverage them for forecasting face limitations of discretisation and tokenisation~\citep{ansariChronosLearningLanguage2024}, offering little practical benefit relative to their computational cost~\citep{tanAreLanguageModels2024}. Modern generative models have been proven capable of generating visually compelling and coherent videos~\citep{johnSurveyProbabilisticGenerative2024}, but critically lack the fine-grained control required to be used in scientific and engineering contexts.

Recent foundation models for forecasting include GenCast for weather~\citep{priceProbabilisticWeatherForecasting2025} and Chronos for general time series~\citep{ansariChronosLearningLanguage2024}, demonstrate the promise of large-scale pretraining. These models leverage massive datasets across multiple domains resulting in strong zero- and few-shot transferability. Chronos captures coarse, long-range correlations remarkably long timespans; however, the granularity, i.e. prediction length still falls at an average of 22 across 55 datasets, with only 7 tasks exceeding 30 steps~\citep{ansariChronosLearningLanguage2024}. GenCast, likewise, can generate 15-day global weather forecasts, but at a granularity of 12 hours, around 30 steps. True progress requires models capable of deterministic yet flexible generation, able to explore plausible trajectories while respecting physical constraints to then select precise forecasts out of the space of plausible predictions~\citep{guoDynamicalDiffusionLearning2025}. Although the short to mid term range is a popular horizon to explore~\citep{limElucidatingDesignChoice2025}, the goal is to generate long-horizon predictions on the order of 30 timesteps or more, generating trajectories that are not just plausible, but physically consistent.

Fundamentally, models relying on discretisation or tokenisation are not ideal for continuous, \gls{pde}-governed dynamics. Demonstrating smooth trajectories in state space which generalise to long forecasting horizons would show greater fidelity to the underlying physics~\citep{leeInterpretableNeuralPDE2023a}. Other existing efforts which leverage diffusion~\citep{molinaroGenerativeAIFast2025, yaoGuidedDiffusionSampling2025,huangDiffusionPDEGenerativePDESolving2024} move toward more natural representations, but are themselves fundamentally tied to stochastic dynamics. Instead, a natural choice for such a problem is flow matching, where the vector field regression is closer to learning \gls{pde} operators which are themselves vector fields describing time-derivatives, and learn deterministic dynamics with potentially more efficient \gls{ode}-based sampling in contrast to the denoising process of diffusion models. Existing flow matching methods have individually worked toward video generation~\citep{davtyanEfficientVideoPrediction2023,jinPyramidalFlowMatching2025} and \gls{pde} single-step prediction~\citep{kerriganFunctionalFlowMatching2023}, but thus far have not been thoroughly tested for long-horizon temporal forecasting and do not design for the deterministic and stable rollouts required for such tasks.

In this work, we propose \gls{tempo}, a latent flow matching capable of forecasting physically meaningful fields over long time horizons with high fidelity in both spatial and spectral characteristics. We perform sparse conditioning for added computational- and data- efficiency, and channel folding to process spatiotemporal 3D data using conventionally 2D frameworks: We leverage recent advances in scientific machine learning by designing time-conditioned parameter-efficient shared Fourier layers within the vector field regressor, allowing for strong capture of global and local spatial modes. We derive theoretical error bounds that characterize the efficiency and expressivity of \gls{tempo}, and showcase its performance on \gls{pde} benchmarking datasets accompanied with a spectral analysis showing a distinct advantage in capturing the essential dynamics required for forecasting. We see a 16\% lower error when predicting vorticity of 2D incompressible Navier Stokes, with Pearson correlations remaining above 0.98 for a 40 step forecasting horizon, demonstrating its stable temporal forecasting and high quality generation capability.

\section{Relevant Works}

Interest in machine learning for physical systems has surged, with generative models being adapted for such tasks and borrowing features for broader generation. For example, \citet{liuDiffFNODiffusionFourier2025,liFourierNeuralOperator2021} integrate an \gls{fno} into a score-matching denoising network, leveraging its resolution-invariant properties to achieve state-of-the-art superresolution. Similarly, Fourier Neural \gls{ode} combines Fourier analysis with Neural \glspl{ode}, outperforming the original \gls{fno}, DeepONet~\citep{luDeepONetLearningNonlinear2021}, and Physics Informed Neural Networks~\citep{raissiPhysicsinformedNeuralNetworks2019} for predicting time instances~\citep{liFourierNeuralODEs2024}. Operator learning has also been integrated with \acrlong{gan}s to generalize to infinite-dimensional function spaces~\citep{rahmanGenerativeAdversarialNeural2022}. However, such approaches leverage desirable representation characteristics of Fourier embedded processing, which diverges from the focus of this work on spatio-temporal generation.

Application-specific models for scientific data have also seen development: GenCFD~\citep{molinaroGenerativeAIFast2025} proposes a conditional diffusion model to generate the underlying distributions of high fidelity flow fields. ~\citet{kerriganFunctionalFlowMatching2023} propose the first extension of \glspl{fno} to flow matching tasks and predicts plausible fluid dynamic fields. ~\citet{yaoGuidedDiffusionSampling2025} leverages neural operators in an unconditional diffusion model to improve efficiency and sees state-of-the-art performance for multi-resolution \gls{pde} tasks, as compared to its competitor DiffusionPDE~\cite{huangDiffusionPDEGenerativePDESolving2024} which originally demonstrated strong performance in solving \glspl{pde} with partial observations. Such methods have thus far focused on single-timeframe prediction, i.e., solving slices of 2D dynamic \glspl{pde}, rather than temporal rollouts as investigated here. Localized \glspl{fno} have been proposed for complex 3D geometries~\citep{flourisLocalizedFNOSpatiotemporal2025}.

Models designed to predict sequences of future states include the aforementioned large-scale Chronos and GenCast~\citep{ansariChronosLearningLanguage2024,priceProbabilisticWeatherForecasting2025}. In addition, pyramidal flow matching~\citep{jinPyramidalFlowMatching2025} produces state-of-the-art video generation compared to leading models~\citep{zhengOpenSoraDemocratizingEfficient2024}, representing a successful flow matching foundation model.~\citep{tamirConditionalFlowMatching2024} present conditional flow matching for time series, succeeding in long 1D trajectories where neural \glspl{ode} fail, but has not scaled to 2D spatiotemporal data.~\citet{kolloviehFlowMatchingGaussian2024} extends this with Gaussian processes for forecasting tasks outside of scientific machine learning. We focus instead on models that fall between these two categories, scaling reasonably to 2D data to match common \gls{pde} settings.


\section{Method}

We begin by developing the background which is then used to construct our method. Flow matching learns a time-dependent velocity field $v_\theta(z,t)$ defining an ODE in the latent space:
\begin{equation}\label{eq:latentode}
\frac{dz(t)}{dt} = v_\theta(z(t),t), \quad z(0) \sim \pi_0,
\end{equation}
where $\pi_0$ is a simple prior (e.g., Gaussian). Integrating this \gls{ode} transports samples to the latent data distribution $\pi_1$, see Appendix~\ref{app:fm}. Training reduces to a regression objective that matches the model velocity field to a target velocity along interpolating probability paths~\citep{lipmanFlowMatchingGenerative2023}. This enables deterministic, simulation-free sampling from complex distributions.

\begin{table}[h]
\centering
\resizebox{\linewidth}{!}{
\begin{threeparttable}
\caption{Representative Path Choices in Flow Matching Models.}
\label{tab:paths}
\begin{tabular}{lllll}
\toprule
Path & $a_t$ & $b_t$ & $c_t$ & Parameter definitions \\
\midrule
\acrshort{affine}\tnote{1}
& $t$ & $0$ & $(1 - (1 - \epsilon_{\min})t)^2$ 
& $\epsilon_{\min} \geq 0$: min. noise level \\
\acrshort{river}\tnote{2} 
& $(1 - (1 - \sigma_{\min})t)$ & $t$ & $\sigma^2$ 
& $\sigma \geq 0$: noise scale, $\sigma_{\min} \geq 0$: min. noise \\
\acrshort{slp}\tnote{3} 
& $(1-t)$ & $t$ & $\sigma_{\min}^2 + \sigma^2 t(1-t)$ 
& $\sigma, \sigma_{\min} \geq 0$: variance parameters \\
\acrshort{ve}\tnote{4}
& $1$ & $0$ & $\sigma_t^2$ 
& $\sigma_t$: geometric schedule, $\sigma_{\min}, \sigma_{\max} > 0$ \\
\acrshort{vp}\tnote{4}
& $\exp(-\tfrac{1}{2}T(1-t))$ & $0$ & $1 - \exp(-T(1-t))$ 
& $\beta_{\min}, \beta_{\max} > 0$, $T(t) = \int_0^t \beta(s)\,ds$ \\
\bottomrule
\end{tabular}
\tiny\begin{tablenotes}
\item[1] ~\citep{lipmanFlowMatchingGenerative2023}, \textsuperscript{2} \citep{davtyanEfficientVideoPrediction2023}, \textsuperscript{3}\citep{limElucidatingDesignChoice2025},\textsuperscript{4} \citep{ryzhakovExplicitFlowMatching2024}
\end{tablenotes}
\end{threeparttable}
}
\end{table}

A key component of flow matching is the choice of the probability density path $p_t$ interpolating between the reference distribution $\pi_0$ and the target $\pi_1$. We focus on Gaussian conditional paths with closed-form velocity fields:  
\[
p_t\!\left(Z \mid \tilde{Z} := (Z_0, Z_1)\right) 
= \mathcal{N}\Big(Z \,\big|\, a_t Z_0 + b_t Z_1, \; c_t^2 I \Big),
\]  
where $a_t, b_t, c_t$ define the path (\cref{tab:paths}). This pair-conditional path is defined for a specific transition $(Z_0,Z_1)$, and the marginal interpolant is obtained by averaging over all pairs: $p_t(Z) = \mathbb{E}_{(Z_0,Z_1)}[p_t(Z \mid Z_0,Z_1)]$. While $\pi_0$ is typically a standard Gaussian, intermediate densities $p_t$ can follow diffusion-inspired, optimal transport, or other custom schedules.

To parameterize $v_\theta$, we modify \glspl{fno}, which approximate mappings between functions via spectral convolution layers. Given input $u$, the \gls{fno} parameterizes an operator  as $\mathcal G_\theta: u \mapsto \tilde{u}, \quad \tilde{u}: \mathcal{D} \to \mathbb{R}^{c_\text{out}}$, that maps $u$ to an output function $\tilde{u}$. Iterative Fourier layers perform spectral transformations of the input $\hat{u}(k) = \mathcal{F}[u](k), \quad \hat{\tilde{u}}(k) = R_\theta(k) \cdot \hat{u}(k),$ followed by an inverse Fourier transform back to the spatial domain; $\tilde{u}(x) = \mathcal{F}^{-1}[\hat{\tilde{u}}](x),$ with $R_\theta(k)$ being learnable Fourier-mode weights and $\mathcal{F}$ denoting the Fourier transform. This spectral representation allows the \gls{fno} to efficiently capture long-range dependencies and global correlations in the data.

\subsection{\acrfull{tempo}}

Using an \gls{fno}-inspired regressor to learn the vector field of a flow matching model has a number of benefits, namely, the added expressivity that the Fourier representation provides at a low computational cost thanks to highly optimised \gls{fft} operations. Building on prior analysis of \glspl{fno} for solving \glspl{pde} \citep{kovachkiUniversalApproximationError2021}, we show that an FNO-inspired regressor can achieve an upper bound on approximation error for flow matching models and we provide a lower bound on the accuracy achievable by sampler-based methods (e.g., Transformer or U-Net) in relation to their number of parameters.

\begin{theorem}[FNO regressor constructive upper bound]
\label{thm:FNO_upper}
Let $\mathbb T^d$ be the $d$--torus. Fix $s,s'\ge0$ and let 
$\mathcal U\subset H^s(\mathbb T^d)$ be compact. 
Suppose $\mathcal G:\mathcal U\to H^{s'}(\mathbb T^d)$ is continuous and satisfies 
$|\widehat{\mathcal G(u)}(k)|\le C_\lambda(1+|k|)^{-p}$ 
for all $u\in\mathcal U$, $k\in\mathbb Z^d$, with constants $C_\lambda>0$, $p>0$. 
If $p> s'+\tfrac d2$ and we define $\alpha:=p-s'-\tfrac d2>0$, then for every 
$\varepsilon>0$ there exists a Fourier Neural Operator $\mathcal G_\theta$ with
\[
P_{\mathrm FNO}(\varepsilon)\;\lesssim\;\varepsilon^{-\,d/\alpha},
\]
such that $\sup_{u\in\mathcal U}\|\mathcal G(u)-\mathcal G_\theta(u)\|_{H^{s'}}\le\varepsilon$.  The hidden constants depend only on 
$d,s,s',\mathcal U,C_\lambda$ and mild/logarithmic factors.
\end{theorem}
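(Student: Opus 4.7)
My plan is the classical two--step decomposition: first replace $\mathcal G(u)$ by its truncation to frequencies $|k|\le K$, then construct an FNO that realises this finite--mode operator up to the same target precision. The polynomial decay hypothesis controls the truncation error, and the FNO parameter count is then read off from the number of retained modes. Concretely, I would pick $\mathcal G_\theta := \mathcal G_\theta^{(K)}$ where $K=K(\varepsilon)$ is tuned so that both the truncation residual and the FNO realisation error are $\le \varepsilon/2$.

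\textbf{Step 1: Spectral truncation bound.} Let $P_K$ denote the Fourier projector onto $\{k\in\mathbb Z^d:|k|\le K\}$. For any $u\in\mathcal U$,
\[
\|\mathcal G(u)-P_K\mathcal G(u)\|_{H^{s'}}^2
=\sum_{|k|>K}(1+|k|^2)^{s'}|\widehat{\mathcal G(u)}(k)|^2
\le C_\lambda^2\sum_{|k|>K}(1+|k|)^{2s'-2p}.
\]
Because $p>s'+d/2$, the series converges and a standard integral comparison yields tail $\lesssim K^{2s'-2p+d}=K^{-2\alpha}$. Hence $\sup_{u\in\mathcal U}\|\mathcal G(u)-P_K\mathcal G(u)\|_{H^{s'}}\lesssim K^{-\alpha}$, and choosing $K\asymp\varepsilon^{-1/\alpha}$ drives this first contribution below $\varepsilon/2$.

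\textbf{Step 2: Finite--mode realisation.} The truncated map $P_K\mathcal G:\mathcal U\to P_K H^{s'}$ takes values in a space of dimension $N_K\asymp K^d$. Since $\mathcal U$ is compact and $P_K\mathcal G$ is continuous, I invoke the FNO universal approximation machinery of Kovachki et al.\ (2021): an FNO with spectral layers truncated at mode $K$, constant lifting width, constant depth, and (possibly) a polynomial modulus adjustment can match any such continuous operator uniformly on $\mathcal U$ to precision $\varepsilon/2$. The dominant parameter budget comes from the spectral weights $R_\theta(k)$ for $|k|\le K$, which contribute $\Theta(N_K)=\Theta(K^d)$ degrees of freedom; the residual MLP/lifting/projection blocks add at most $O(K^d)$ up to log factors. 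Substituting $K\asymp\varepsilon^{-1/\alpha}$ gives $P_{\mathrm{FNO}}(\varepsilon)\lesssim\varepsilon^{-d/\alpha}$, and a triangle inequality between Steps 1 and 2 closes the argument.

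\textbf{Main obstacle.} The delicate point is Step 2: quantifying the FNO size needed to realise $P_K\mathcal G$ so that the parameter count is driven solely by the number of retained modes $K^d$, with only mild or logarithmic dependence on the modulus of continuity of $\mathcal G$ on $\mathcal U$. The decay hypothesis only guarantees continuity, not smoothness, of the target, so I must appeal to a quantitative version of the Kovachki construction that handles merely continuous operators on compact input sets. If needed, I would combine an Arzelà--Ascoli style equicontinuity argument on $\mathcal U$ with a finite $\varepsilon/2$--net whose size is absorbed into the hidden constant, leaving the $\varepsilon^{-d/\alpha}$ scaling of the spectral block intact.
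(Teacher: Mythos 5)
Your proposal follows essentially the same route as the paper's proof: the identical spectral-truncation tail estimate with $K\asymp\varepsilon^{-1/\alpha}$, followed by a reduction to a finite-dimensional continuous map (the paper makes the input-side projection an explicit separate step with an $\varepsilon/6$ budget, whereas you fold it into your ``main obstacle'' discussion) and an appeal to the constructive Kovachki et al.\ universal-approximation results to realise that map with $O(K^d)$ parameters up to logarithmic factors. The argument is correct at the same level of rigor as the paper's, so no further comparison is needed.
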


This result is in line with the estimates and arguments made in \citep{kovachkiUniversalApproximationError2021}.

\begin{proof}[Sketch of proof of Theorem~\ref{thm:FNO_upper}]
(Spectral truncation.) The Fourier decay assumption implies that high-frequency 
modes of $\mathcal G(u)$ contribute at most $O(K^{-2\alpha})$ to the $H^{s'}$-error. 
Choosing $K\asymp \varepsilon^{-1/\alpha}$ makes this truncation error $\le \varepsilon/2$.  

(Finite-dimensional reduction.) For this cutoff $K$, the operator $\mathcal G_K$ 
is determined by $O(K^d)$ Fourier coefficients, and inputs can likewise be 
restricted to finitely many low modes without significant loss of accuracy. 
Thus the problem reduces to approximating a continuous map between compact 
subsets of $\R^{m_{\rm in}}$ and $\R^{m_{\rm out}}$, with 
$m_{\rm out}\asymp K^d$.  

(Approximation by networks.) Standard universal approximation results (or the 
constructive \gls{fno} design in \citep{kovachkiUniversalApproximationError2021}) ensure that such a finite map 
can be uniformly approximated by a network with $O(K^d)$ parameters, up to mild 
logarithmic factors.  

(Conclusion.) Combining these errors yields an overall accuracy $\varepsilon$ 
with parameter count $P\lesssim K^d \asymp \varepsilon^{-d/\alpha}$, proving the claim.
\end{proof}

\begin{proposition}[Transformer/UNet Sampler-based lower bound]
\label{prop:sampler_lower}
Under the assumptions of Theorem~\ref{thm:FNO_upper}, consider any learner that 
observes each $u\in\mathcal U$ only through $n$ fixed point evaluations and 
applies a parametric map with $P$ real parameters, required in the worst case 
to reconstruct all Fourier modes up to radius $K\asymp\varepsilon^{-1/\alpha}$. 
Then necessarily
\[
n \;\gtrsim\; \varepsilon^{-\,d/\alpha}, 
\qquad 
P_{\mathrm{sampler}}(\varepsilon)\;\gtrsim\;\varepsilon^{-\,\beta d/\alpha},
\]
for some architecture--dependent $\beta\ge1$ (optimistically $\beta=1$ when only 
diagonal mode-wise maps are needed, generically $\beta=2$ for arbitrary dense 
linear maps). These bounds are information-theoretic and asymptotic, up to 
constants and mild/logarithmic factors.
\end{proposition}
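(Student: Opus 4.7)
The plan is to prove the two bounds in sequence by information-theoretic and dimension-counting arguments built on the observation that, by Theorem~\ref{thm:FNO_upper}, reaching accuracy $\varepsilon$ in $H^{s'}$ requires resolving all Fourier modes $|k|\le K\asymp\varepsilon^{-1/\alpha}$, numbering $m:=|\{k\in\mathbb Z^d:|k|\le K\}|\asymp K^d$. Both bounds then reduce to statements about how many linear functionals, and how many free parameters, are needed to faithfully represent the $m$-dimensional spectral block.

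For the sampling lower bound, I would argue that $n$ fixed point evaluations produce at most $n$ linearly independent measurements of the input, so by a Nyquist/aliasing argument on the torus, recovering the spectrum up to radius $K$ forces $n\gtrsim m$. To make this rigorous I would construct an adversarial family of band-limited trigonometric polynomials in $\mathcal U$ whose images under $\mathcal G$ are pairwise $\varepsilon$-separated in $H^{s'}$ yet coincide on any chosen grid of fewer than $cm$ points; this explicit indistinguishability gives $n\gtrsim\varepsilon^{-d/\alpha}$.

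For the parameter lower bound, I view the learner as a parametric map $f_\theta:\mathbb R^n\to\mathbb R^{m_{\mathrm{out}}}$ with $m_{\mathrm{out}}\asymp m$ and use architectural complexity to determine how many parameters are required. Optimistically, when the map is diagonal mode-by-mode in the Fourier basis, one parameter per output mode suffices, giving $P\asymp m_{\mathrm{out}}\asymp K^d$, hence $\beta=1$. Generically, however, the map must implement a dense linear transformation between the $n$-dimensional sample space and the $m_{\mathrm{out}}$-dimensional Fourier output, whose matrix has $n\cdot m_{\mathrm{out}}\asymp K^{2d}$ entries and hence requires $\beta=2$. A rigorous bound follows from pseudo-dimension / Warren-style sign-pattern estimates: the family $\{f_\theta\}_{\theta\in\mathbb R^P}$ admits an $\varepsilon$-cover of size at most $(1/\varepsilon)^{\tilde O(P)}$, whereas the projection of $\mathcal G(\mathcal U)$ onto the first $m_{\mathrm{out}}$ Fourier coordinates has $\varepsilon$-packing number at least $(1/\varepsilon)^{\beta m_{\mathrm{out}}}$, yielding $P\gtrsim\beta m\asymp\varepsilon^{-\beta d/\alpha}$.

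The hardest part will be making the $\beta$-dependence rigorous for the concrete architecture classes of interest (transformers, U-Nets) rather than generic parametric maps, and constructing a single adversarial family in $\mathcal U$ that simultaneously saturates both the sampling and parameter bounds while respecting the Sobolev and spectral decay assumptions on the target. The logarithmic slack intrinsic to pseudo-dimension bounds and the non-smoothness of ReLU-type activations introduce further technicalities, but because the statement is asymptotic up to mild logarithmic factors, the coarse $\beta\in\{1,2\}$ dichotomy should suffice to carry the argument through.
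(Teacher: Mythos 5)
Your proposal follows essentially the same route as the paper: the sampling bound comes from requiring the point-evaluation map to be injective on the $D_K\asymp K^d$-dimensional band-limited subspace (your adversarial band-limited family vanishing on any sub-Nyquist grid is exactly the paper's "nonzero $u\in V_K$ vanishing at all sample points"), and the parameter bound comes from counting degrees of freedom needed to represent diagonal versus dense linear maps on that subspace, giving $\beta=1$ or $\beta=2$. The extra machinery you invoke (pseudo-dimension/Warren covering estimates, a single family saturating both bounds) goes beyond what the paper actually does --- the paper stops at the heuristic dimension count --- so it is not needed to match the stated result, though it would be required to make the proposition fully rigorous for concrete architectures.
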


\begin{proof}[Sketch of proof of Proposition~\ref{prop:sampler_lower}]
(Sampling necessity.) The $K$--mode subspace $V_K$ has dimension $D_K\asymp K^d$.
Sampling at $n$ points defines a linear map $S:V_K\to \mathbb{C}^n$. For $S$ to be injective
on $V_K$, its matrix must have rank $D_K$, hence
$n\ge D_K\asymp K^d$.  

(Parameter complexity.) After sampling, the learner implements a parametric map
$M: \mathbb{C}^n\to \mathbb{C}^m$. To represent arbitrary linear maps on the $D_K$-dimensional
coefficient space (e.g. arbitrary diagonal multipliers), the parameter family
must have at least $P\gtrsim D_K$ degrees of freedom. For fully general dense
linear maps one needs $P\gtrsim D_K^2$.  

(Conversion.) Substituting $K\asymp \varepsilon^{-1/\alpha}$ (from the theorem)
gives $n\gtrsim \varepsilon^{-d/\alpha}$ and 
$P\gtrsim \varepsilon^{-\beta d/\alpha}$ with $\beta=1$ (optimistic) or $\beta=2$
(dense case), establishing the lower bound, see Appendix~\ref{app:proofs} for the extended proof.
\end{proof}

\begin{corollary}[FNO vs sampler scaling]\label{cor:efficiency}
From Theorem~\ref{thm:FNO_upper} and Proposition~\ref{prop:sampler_lower} one has
\[
P_{\mathrm FNO}(\varepsilon)\;\lesssim\;\varepsilon^{-\,d/\alpha},
\qquad 
P_{\mathrm{sampler}}(\varepsilon)\;\gtrsim\;\varepsilon^{-\,\beta d/\alpha}.
\]
Hence, whenever $\beta>1$, \glspl{fno} achieve the same accuracy $\varepsilon$ with
asymptotically fewer parameters than sampler--based learners.
\end{corollary}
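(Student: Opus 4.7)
The plan is to treat this as a direct juxtaposition of the two bounds already proven in Theorem~\ref{thm:FNO_upper} and Proposition~\ref{prop:sampler_lower}; no fresh analytic machinery is required, only a careful comparison of exponents. The first thing I would check is that both exponents are positive, which follows from the hypothesis $\alpha = p - s' - d/2 > 0$ of Theorem~\ref{thm:FNO_upper}, so that both $P_{\mathrm{FNO}}(\varepsilon)$ and $P_{\mathrm{sampler}}(\varepsilon)$ diverge as $\varepsilon \to 0^+$ and the comparison is meaningful.

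Next I would simply copy the two inequalities from the cited results side by side and form the asymptotic ratio
\[
\frac{P_{\mathrm{sampler}}(\varepsilon)}{P_{\mathrm{FNO}}(\varepsilon)} \;\gtrsim\; \frac{\varepsilon^{-\beta d/\alpha}}{\varepsilon^{-d/\alpha}} \;=\; \varepsilon^{-(\beta-1)d/\alpha}.
\]
Under the assumption $\beta > 1$, the exponent $-(\beta-1)d/\alpha$ is strictly negative, so the right-hand side tends to infinity as $\varepsilon \to 0^+$. This is exactly the claim that FNOs attain accuracy $\varepsilon$ with asymptotically fewer parameters than any sampler-based learner. As a sanity check I would note the two boundary regimes mentioned in Proposition~\ref{prop:sampler_lower}: the optimistic $\beta=1$ case collapses the separation (no gap is claimed there), while the generic dense case $\beta=2$ yields a quadratic separation $\varepsilon^{-d/\alpha}$ versus $\varepsilon^{-2d/\alpha}$.

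The main, essentially only, subtlety is ensuring the quantifier structure lines up. Theorem~\ref{thm:FNO_upper} produces \emph{some} FNO realising the upper bound, whereas Proposition~\ref{prop:sampler_lower} states a bound that holds for \emph{every} admissible sampler-based learner in the worst case over $\mathcal U$; this asymmetry is exactly what is needed for the comparison to constitute a genuine separation rather than a statement about specific constructions. I would also briefly observe that the hidden constants and mild/logarithmic factors absorbed in the $\lesssim$ and $\gtrsim$ notation cannot affect the conclusion, because any polynomial gap $\varepsilon^{-(\beta-1)d/\alpha}$ with strict $\beta > 1$ dominates any polylogarithmic correction as $\varepsilon \to 0^+$. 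With these observations recorded the statement follows immediately, and no further estimation is needed.
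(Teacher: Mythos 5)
Your proposal is correct and matches the paper's treatment: the corollary is an immediate juxtaposition of Theorem~\ref{thm:FNO_upper} and Proposition~\ref{prop:sampler_lower}, and your ratio argument with the observation that $\beta>1$ forces a polynomial gap (unharmed by polylogarithmic factors) is exactly the intended reasoning. Your remarks on the quantifier asymmetry and the collapse at $\beta=1$ are accurate refinements of the same argument, not a different route.
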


\paragraph{\gls{tempo}}
Consequently, we propose a novel generation model which capitalises on the \gls{fno}'s expressivity and capacity to model complex velocity fields  by designing a latent time-conditioned \gls{fno} vector field regressor using channel folding for both efficiency and enhanced temporal coherency. Together with temporal conditioning~\citep{davtyanEfficientVideoPrediction2023}, these define a novel, end-to-end trainable model for predicting latent dynamics.

Let $f_\phi: \mathbb{R}^X \rightarrow \mathbb{R}^Z$ denote an encoder mapping data points $x$ to latent embeddings $z = f_\phi(x)$. We can then define a latent-space velocity field described by~\ref{eq:latentode} where $v_\theta$ is parameterized by an \gls{fno}.

To capture the temporal dependencies, we leverage \emph{sparse conditioning}~\citep{davtyanEfficientVideoPrediction2023,limElucidatingDesignChoice2025}. For some discrete-time sequence $\{x_t\}_{t=1}^N$ with $x_t \in \mathcal{X}$, its latent representation is given by $\{z_t\}_{t=1}^N$, where $z_t = f_\phi(x_t)$. 
For a prediction horizon $T \in \{L, \dots, N-1\}$ with sequence length $L$, the objective is to predict the next latent embedding $z_{T+1}$. We define a reference embedding to be $z_T$, corresponding to the most recent observation prior to the prediction target, and a conditioning embedding as some observation selected at a timestep $\tau \in \{T-L, \dots, T-1\}$. These two embeddings are concatenated with the temporal offset, defined as $\Delta = T - \tau$, which is the extent of temporal data the model is provided to predict the next-step embedding, $\hat{z}_{T+1} = f_\theta(z_T, z_\tau, \Delta)$.

To process the spatiotemporal input data and conditioning while preserving compatibility the 2D \gls{fno}, we then propose a \emph{channel folding} scheme that merges the batch and channel dimensions (as opposed to the more conventional batch and time dimensions) to align with the original input ordering of the \gls{fno}). To match with the expected inputs of the form \(\mathbb{R}^{B' \times T' \times H \times W}\), we collapse the batch and channel axes into a single ``effective batch'' dimension $u' \in \mathbb{R}^{(B \cdot C) \times T \times H \times W}$. This folding operation effectively treats each channel of each sample as an independent element within the extended batch. As a consequence, the \gls{fno} is applied identically across all channels but without cross-channel mixing at this stage.

This \emph{time-conditioned \gls{fno}} then operates over latent temporal embeddings as functions on their spatial domain $v_\theta(z,t) = \mathcal G_\theta(z)$ to learn the time-dependent vector field that transports a reference latent distribution $\pi_0$ to the latent data distribution $\pi_1$. By leveraging the spectral inductive bias of \glspl{fno}, the learned velocity field can capture both local and long-range correlations efficiently, improving the expressivity and stability of flow matching in high-dimensional latent spaces.


\section{Experiments}

The \gls{tempo} is evaluated with the goal of assessing its ability to learn accurate stochastic latent-space dynamics and forecast high-dimensional solution fields over medium to long time horizons. We test our method over \gls{pde} datasets which pose challenging spatio-temporal correlations and multiscale features, making them a natural testbed for latent flow-based modeling.

Our proposed \gls{tempo} was set against five key methods. The state-of-the-art video generation method based on a U-Net shaped \gls{vit} and modified optimal transport path \gls{river} proposed by \citet{davtyanEfficientVideoPrediction2023} matches or surpasses common video prediction benchmarks using ~10x fewer computational resources~\citep{davtyanEfficientVideoPrediction2023}. We also include the baseline conditional flow matching~\citet{lipmanFlowMatchingGenerative2023} which implements a U-Net trained using a theoretically optimal \gls{affine} path. The \gls{slp} was proposed by~\citet{limElucidatingDesignChoice2025}, tested with a \gls{vit} to directly address the challenges of spatiotemporal forecasting for \gls{pde} datasets. The Transformer-based latent space flow matching method with \gls{affine} proposed by~\citet{daoFlowMatchingLatent2023} further demonstrates competitive performance in image generation using latent flow matching compared against both flow matching models and diffusion models~\citep{phungWaveletDiffusionModels2023,hoDenoisingDiffusionProbabilistic2020} among others. We also evaluate both \gls{vp} and \gls{ve} paths which generalise the Denoising Diffusion Probabilistic noise perturbation model and a score-based model to flow matching paths, respectively~\citep{hoDenoisingDiffusionProbabilistic2020,songScoreBasedGenerativeModeling2021}. \citet{ryzhakovExplicitFlowMatching2024} establishes strong theoretical backing for both paths.


We then ablate the specific implementation of the methods (consisting of a specific architecture and a specific probability path). In summary, the choice of regressor includes our proposed \gls{tempo} regressor, and additionally implement a \gls{vit} regressor~\citep{davtyanEfficientVideoPrediction2023,limElucidatingDesignChoice2025} and a classic U-Net regressor~\citep{lipmanFlowMatchingGenerative2023}. We pretrain a convolutional autoencoder with residual and attention blocks to obtain a compressed latent representation of the dynamics, see Appendix~\ref{app:ae}.

All methods were conditioned using sparse conditioning. These models are then supervised by each probability density paths described in~\cref{tab:paths}, with further details in Appendix~\ref{app:modelhypers}. The Adam optimiser was used with a learning rate of 1e-4 for the \gls{fno}, and 5e-5 for the \gls{vit} and U-Net regressors. Models are trained on an 80/10/10 training to validation to test data splits. 

We evaluate our models on three spatiotemporal \gls{pde} datasets: the \acrfull{swe}, which simulate 2D free-surface flows; \acrfull{rd} exhibiting multiscale nonlinear patterns; and \acrfull{nsv} dataset capturing chaotic turbulent dynamics. During training, models are sparsely conditioned on the first 15 frames and tasked with predicting the subsequent frame at resolutions of $1\times128\times128$ (\gls{swe}), $2\times128\times128$ (\gls{rd}), and $1\times64\times64$ (\gls{nsv}), see Appendix \ref{app:datasets}.





\section{Results}\label{sec:results}

\begin{table}[ht!]
\centering
\resizebox{\linewidth}{!}{
\begin{threeparttable}
\caption{\gls{nsv} Results: Comparison of \gls{tempo}, U-Net, and \gls{vit} models.}
\label{tab:nsv}
\begin{tabular}{llrrrrrrr}
\toprule
Regressor & Path & MSE $\downarrow$ & SpectralMSE $\downarrow$ & RFNE $\downarrow$ & PSNR $\uparrow$ & Pearson $\uparrow$ & SSIM $\uparrow$ \\
\midrule
\multirow{4}{*}{\gls{tempo}}
  & \acrshort{river}     & \textbf{5.63e-02} & \textbf{3.84e-02} & \textbf{2.50e-01} & \textbf{25.19} & \textbf{0.969} & 0.786 \\
  & \acrshort{affine}    & 5.77e-02 & 3.98e-02 & 2.54e-01 & 25.08 & 0.968 & \textbf{0.789} \\
  & \acrshort{vp}        & 8.10e-02 & 5.34e-02 & 2.85e-01 & 23.61 & 0.955 & 0.731 \\
  & \acrshort{ve}        & 2.96e-01 & 1.73e-01 & 5.60e-01 & 17.98 & 0.821 & 0.373 \\
\midrule
\multirow{4}{*}{\gls{vit}}
  & \acrshort{affine}\tnote{1}    & 6.75e-02 & 4.38e-02 & 2.72e-01 & 24.40 & 0.962 & 0.758 \\
  & \acrshort{river}\tnote{2} & 6.88e-02 & 4.33e-02 & 2.73e-01 & 24.32 & 0.962 & 0.750 \\
  & \acrshort{vp}\tnote{3}    & 7.77e-02 & 4.95e-02 & 2.89e-01 & 23.79 & 0.956 & 0.729 \\
  & \acrshort{ve}\tnote{3}    & 1.63e+00 & 9.27e-01 & 1.35e+00 & 10.67 & 0.118 & 0.024 \\
\midrule
\multirow{4}{*}{U-Net}
  & \acrshort{vp}\tnote{4}        & 4.05e-01 & 3.26e-01 & 6.71e-01 & 16.62 & 0.756 & 0.323 \\
  & \acrshort{river}     & 4.08e-01 & 3.28e-01 & 6.74e-01 & 16.59 & 0.752 & 0.321 \\
  & \acrshort{affine}\tnote{5} & 4.10e-01 & 3.42e-01 & 6.76e-01 & 16.57 & 0.751 & 0.324 \\
  & \acrshort{ve}\tnote{4}        & 5.02e-01 & 3.70e-01 & 7.48e-01 & 15.68 & 0.694 & 0.263 \\
\bottomrule
\end{tabular}
\tiny\begin{tablenotes}
\item[1] \citep{daoFlowMatchingLatent2023}, \textsuperscript{2}\citep{davtyanEfficientVideoPrediction2023}, \textsuperscript{3} \citep{limElucidatingDesignChoice2025,songGenerativeModelingEstimating2020}, \textsuperscript{4}\citep{ryzhakovExplicitFlowMatching2024}, \textsuperscript{5}\citep{lipmanFlowMatchingGenerative2023}
\end{tablenotes}
\end{threeparttable}
}
\end{table}

\begin{table}[h!]
\centering
\resizebox{\linewidth}{!}{
\begin{threeparttable}
\caption{\gls{swe} and \gls{rd} Results: Comparison of \gls{tempo}, U-Net, and \gls{vit} models.}
\label{tab:swe_dr}
\begin{tabular}{lllllllll}
\toprule
Dataset & Method & MSE $\downarrow$ & SpectralMSE $\downarrow$ & RFNE $\downarrow$ & PSNR $\uparrow$ & Pearson $\uparrow$ & SSIM $\uparrow$ \\
\midrule
\multirow{5}{*}{\gls{swe}}
 & \method{\acrshort{tempo}}{\acrshort{affine}}       & \textbf{6.64e-05} & \textbf{5.65e-05} & \textbf{7.64e-03} & \textbf{46.5} & \textbf{0.998} & \textbf{0.997} \\
 & \method{\acrshort{vit}}{\acrshort{affine}}\tnote{1}     & 9.59e-05 & 7.93e-05 & 9.06e-03 & 44.9 & 0.997 & 0.995 \\
 & \method{\acrshort{vit}}{\acrshort{vp}}\tnote{2}    & 1.30e-04 & 8.81e-05 & 1.05e-02 & 43.6 & 0.996 & 0.993 \\
 & \method{\acrshort{vit}}{\acrshort{river}}\tnote{3} & 2.99e-04 & 1.67e-04 & 1.63e-02 & 40.0 & 0.992 & 0.981 \\
 & \method{\acrshort{vit}}{\acrshort{slp}}\tnote{4}  & 6.60e-04 & - & 1.28e-01 & 36.1 & - & 0.93 \\
\midrule\midrule
\multirow{4}{*}{\gls{rd}}
 & \method{\acrshort{tempo}}{\acrshort{affine}}     & \textbf{2.76e-05} & \textbf{2.18e-05} & \textbf{3.29e-02} & \textbf{65.7} & \textbf{1.000} & \textbf{0.999} \\
 & \method{U-Net}{\acrshort{affine}}\tnote{5}  & 3.09e-05 & 2.45e-05 & 3.57e-02 & 65.2 & 0.999 & \textbf{0.999} \\
 & \method{\acrshort{vit}}{\acrshort{affine}}     & 6.30e-04 & 4.40e-04 & 1.67e-01 & 52.2 & 0.987 & 0.986 \\
 & \method{\acrshort{vit}}{\acrshort{slp}}\tnote{4} & 3.56e-04 & - & 1.16e-01 & 34.3 & -  & 0.90 \\
\bottomrule
\end{tabular}
\tiny\begin{tablenotes}
\item[1] \citep{daoFlowMatchingLatent2023}, \textsuperscript{2}\citep{limElucidatingDesignChoice2025,songGenerativeModelingEstimating2020}, \textsuperscript{3}\citep{davtyanEfficientVideoPrediction2023}, \textsuperscript{4}\citep{limElucidatingDesignChoice2025}; results reported from original paper trained on same dataset., \textsuperscript{5}\citep{lipmanFlowMatchingGenerative2023}
\end{tablenotes}
\end{threeparttable}
}
\end{table}

Overall, \gls{tempo} outperforms the methods proposed by \citet{limElucidatingDesignChoice2025,songDenoisingDiffusionImplicit2022,lipmanFlowMatchingGenerative2023} and \citet{davtyanEfficientVideoPrediction2023} as well as the ablated methods. For results predicting \gls{nsv} in~\cref{tab:nsv}, we observe a 16\% improvement in MSE and an 11.4\% lower spectral MSE, producing spatially and spectrally accurate next steps. Its lower RFNE indicates reduced scale-independent error, while SSIM shows improved fidelity in local features, critical for the localized vorticity patterns where small spatial distortions significantly affect downstream evolution~\citep{majdaVorticityIncompressibleFlow2001}. PSNR and Pearson see lower normalised ranges in values, indicating that large scale features like the vorticity intensity and global structure agreement, respectively, are more easily captured across all models, with a clear advantage by \gls{tempo}; additional visualisations in Appendix~\ref{app:nsv}.

We select top performing comparisons for \gls{swe} and \gls{rd}, ~\cref{tab:swe_dr}), where \gls{tempo} maintains superior performance. In \gls{swe}, it achieves a 28.8\% lower SpectralMSE and higher PSNR, indicating faithful amplitude, spectral content, and structural coherence with sharp boundaries preserved, see Appendix~\ref{app:swe} for additional visualisations and ablated comparisons. Overall MSE is reduced by 30.8

In \gls{rd}, \method{U-Net}{\gls{affine}} competes closely with \gls{tempo}, benefiting from translation-equivariant convolutional layers that capture multi-scale dynamics and repeating local structures~\citep{cohenGroupEquivariantConvolutional2016}. Both \gls{tempo} and the U-Net have nearly matched PSNR, Pearon, and SSIM scores, with an improvement of 11\% in SpectralMSE from the \gls{tempo}. By contrast, the next best \gls{vit} regressor model is 95.6\% drop in SpectralMSE, where attention might emphasize low-frequency global structures~\citep{wangAntiOversmoothingDeepVision2022,piaoFredformerFrequencyDebiased2024}; see visual comparison in Appendix~\ref{app:rd}.



\begin{figure}[h!]
    \centering
    \begin{subfigure}{0.68\textwidth}
        \includegraphics[width=\linewidth]{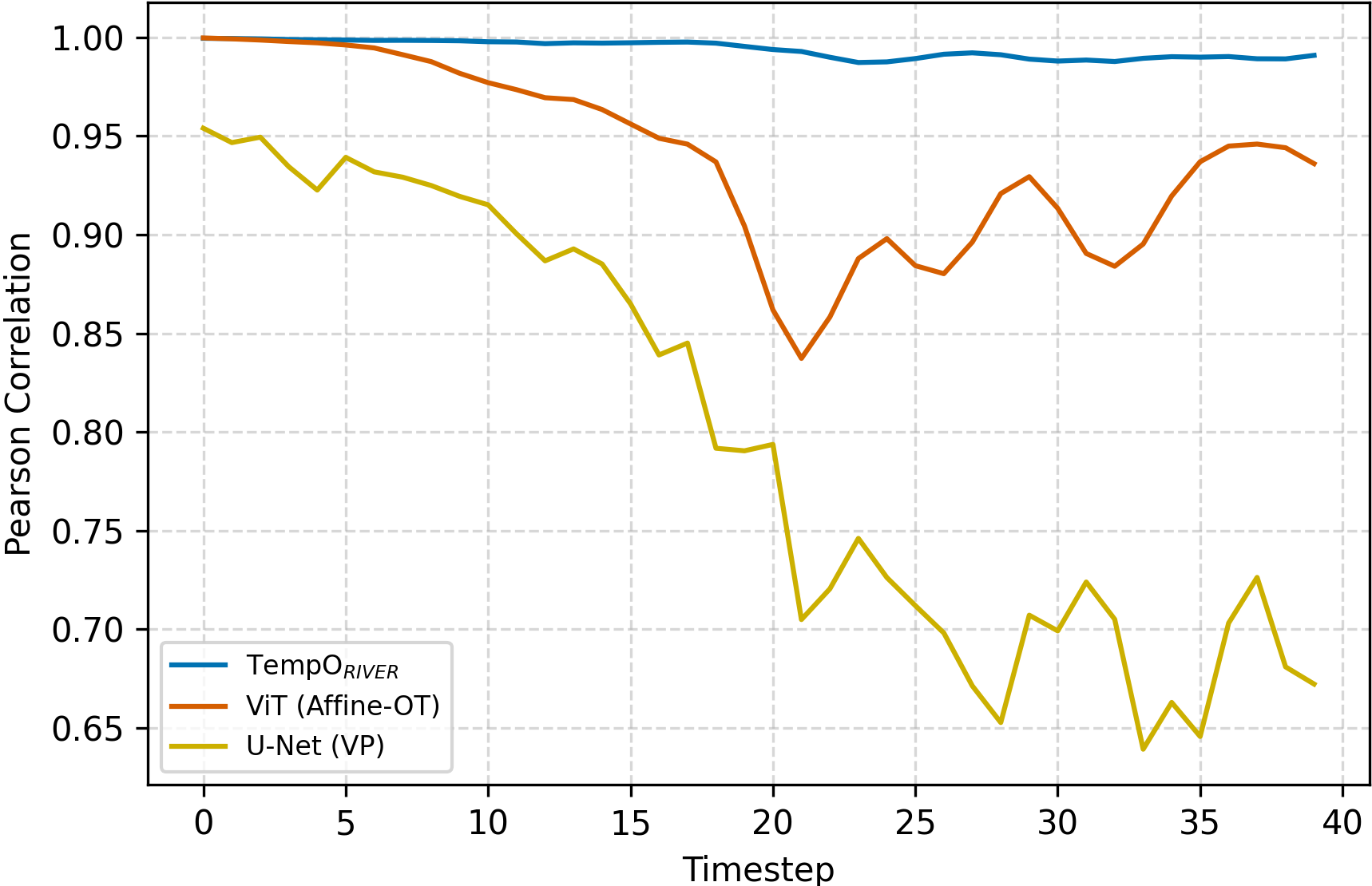}
        \label{fig:pearson_nsv}
    \end{subfigure}%
    \hfill
    \begin{subfigure}{0.3\textwidth}
        \raisebox{+0.1\height}{
            \includegraphics[width=\linewidth]{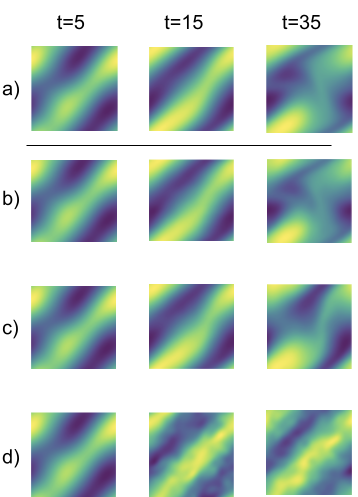}
        }
        \label{fig:snapshots}
    \end{subfigure}

    \vspace{-1.0em} 
    \caption{\textbf{Prediction performance comparison for \gls{nsv}.} 
    \textit{Left: Pearson correlation across forecasted timesteps.} 
    Forty timesteps are predicted by \gls{tempo}, \gls{vit}, and U-Net conditioned on two preceding timesteps and sampled for each proceeding step. 
    The Pearson correlation coefficient shows significant degradation for the U-Net, oscillatory behavior and degradation for the \gls{vit}, and consistently stable values above $0.98$ for \gls{tempo}. 
    \textit{Right: Predicted vorticity fields.} 
    True data (a), \gls{tempo} (b), \gls{vit} (c), and U-Net (d). 
    At timesteps 5, 15, and 35 the \gls{vit} and U-Net models clearly diverge, with U-Net regressing to a noisy, while TempO maintaining excellent accuracy.}
    \label{fig:tempo_vs_vit_unet}
\end{figure}

The timeseries forecasting task, see~\cref{fig:tempo_vs_vit_unet}, evaluates how well models capture the underlying \gls{pde}. The model is provided two initial timeframes representing the conditioning and reference frames, respectively, and is then sampled for increasing temporal offsets with the reference set to be the most recent generation. \gls{tempo} maintains Pearson correlation above 0.98 over 40 forecasted timesteps, indicating stable amplitude and phase tracking. The \gls{vit} regressor holds above 0.95 for 20 steps before degrading, while the flow matching baseline~\citep{lipmanFlowMatchingGenerative2023} shows steady decline. This suggests \gls{tempo} effectively mimics the dynamics without significant error accumulation. This is further demonstrated by visualisations of the vorticity field at key timesteps in~\cref{fig:tempo_vs_vit_unet} (right), where $t=35$ most clearly shows \gls{tempo}'s faithful capture of turbulent eddies in comparison to the \gls{vit} regressor, which fails to predict the small vortical structure.

\subsection{Spectral Analysis}

\begin{figure}
    \centering
    \includegraphics[width=1.0\linewidth]{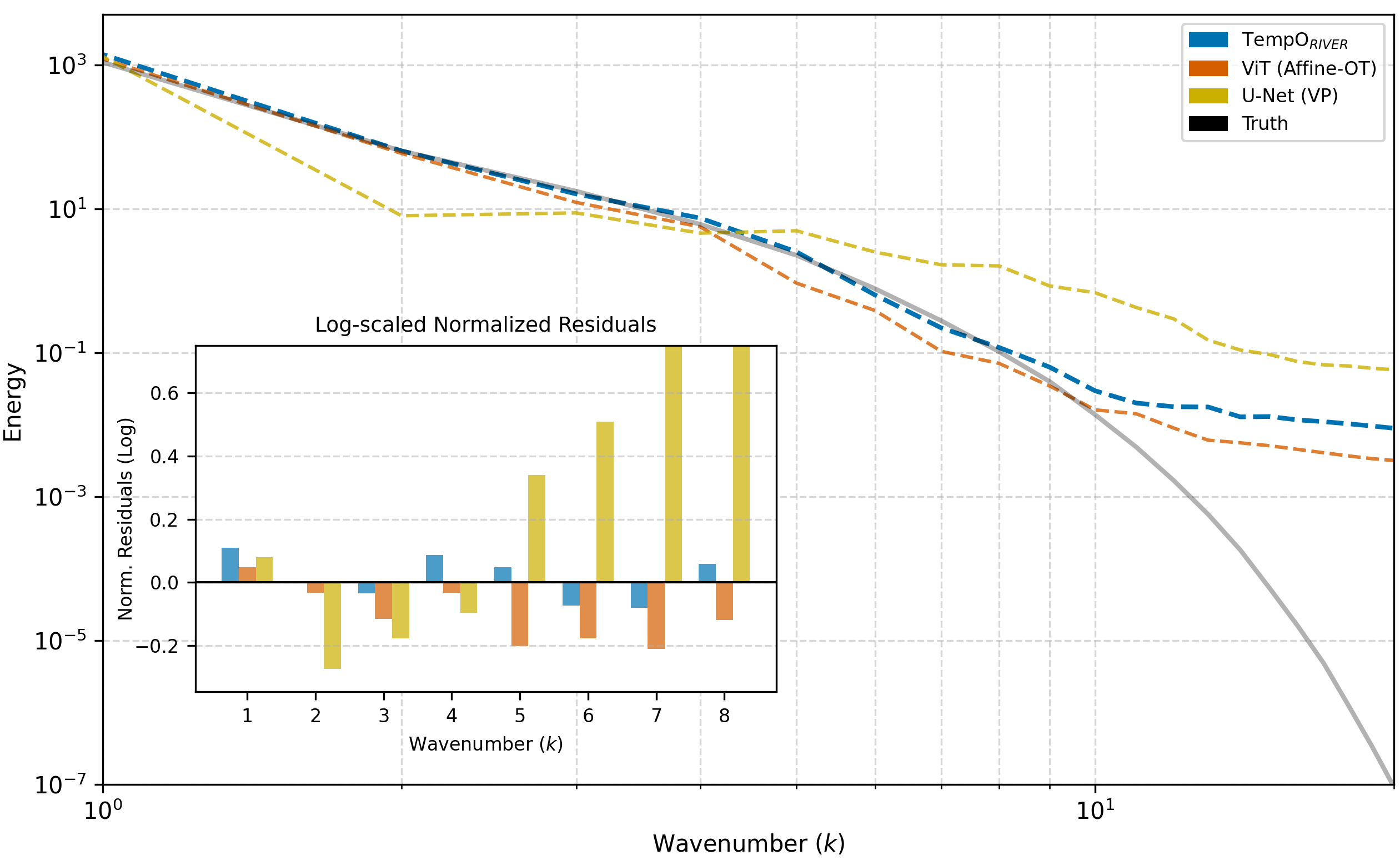}
    \caption{\textbf{Spectral graphs for \gls{nsv}.} 
    Comparison of energy spectra for \gls{tempo}, a \gls{vit}-based model, 
    and the U-Net baseline~\citep{lipmanFlowMatchingGenerative2023}. 
    The first eight Fourier modes capture 99\% of the energy, with higher modes negligible. 
    \gls{tempo} aligns closely, while the \gls{vit} underestimates energy beyond $k=4$. 
    The inset bar plot shows \gls{tempo} oscillating tightly around zero with small deviations, the \gls{vit} producing larger negative deviations, 
    and the U-Net performing markedly worse.}
    \label{fig:spectrum}
\end{figure}


    

\begin{wrapfigure}[10]{l}{0.35\textwidth}
    \centering
    \vspace{-1.5em}
    \begin{tabular}{rr}
    \toprule
    Modes & SpectralMSE \\
    \midrule
    1  & 8.57e-02 \\
    2  & 4.10e-02 \\
    4  & 3.98e-02 \\
    8  & 3.79e-02 \\
    16 & 3.74e-02 \\
    \bottomrule
    \end{tabular}
    \caption{Ablation: Fourier mode cutoffs with \gls{tempo}.}
    \label{tab:spectralmse_modes}
\end{wrapfigure}

The spectral analysis of \gls{tempo} versus the top alternative \method{\gls{vit}}{\gls{affine}} and the baseline \method{U-Net}{\gls{affine}}~\citep{lipmanFlowMatchingGenerative2023} in~\cref{fig:spectrum} examines the scale-resolved error via the energy per wavenumber $k$, or at the scale of $\frac{1}{k}$. This provides scale-resolved context to the SpectralMSE, which averages the MSE of the Fourier coefficients to a single metric. For \gls{nsv}, the first 8 modes which cumulatively make up 99\% of the total energy, beyond which the modes have negligible contributions to overall flow dynamics, see Appendix~\ref{app:spectral_ground_truth}. \gls{tempo} closely follows the true spectrum compared to both \method{\gls{vit}}{\gls{affine}} and \method{U-Net}{\gls{affine}}, though all three methods diverge past $k=8$. We observe from the inset of~\cref{fig:spectrum} that \gls{tempo} exhibits a small residual which fluctuates about 0 whereas the \method{\gls{vit}}{\gls{affine}} has a negative and increasing error: the \gls{vit} regressor tends to capture the lower wavenumbers well, but then underestimates the higher wavelengths notably after $k=4$.

We observe also that the number of modes retained during the \gls{fft} of \gls{tempo} in~\cref{tab:spectralmse_modes} follows the observation of a close spectral match up until $k=8$, where the SpectralMSE sees the most improvement; however, from 8 modes to 16 modes, the performance appears to saturate. \cref{tab:spectralmse_modes} demonstrates that up to 8 modes capture the essential dynamics, while the fundamental frequency alone is insufficient and likely under-represents necessary higher frequency components; adding more than 8 modes yields diminishing returns, matching the true spectral analysis; extended metrics support this trend in Appendix~\ref{app:ablation}. This empirical saturation beyond 8 modes is consistent with the theoretical expectation in \cref{cor:efficiency}, where \glspl{fno} are shown to achieve accuracy with asymptotically fewer parameters by leveraging only the most informative spectral modes.

\subsection{Efficiency}

Finally, we also train the models over varying sequence lengths and measuring next-step prediction error (MSE) and 40-step forecast error (MSE/time), shown in~\cref{tab:seq_len}. MSE is lowest for shorter sequences, as the model learns from fewer choices of indices for sparse conditioning during training. Conversely, MSE/time decreases with longer sequences, reflecting better long-horizon performance. Notably, \gls{tempo}'s MSE/time drops faster and plateaus lower than the \gls{vit}, indicating better data efficiency to extrapolate from the same available sequence length.

\gls{tempo} is the most lightweight model among the three choices of regressors, with \~7x fewer parameters than the \gls{vit} and \~28x fewer than the U-Net. In addition, it sees a significantly lower memory usage compared to the \gls{vit} where attention has higher demands and the U-Net where skip-connections hold onto additional memory.

While \gls{tempo} has a moderate number of FLoating Point OPerations (FLOPs), landing between the \gls{vit} and U-Net, this may be offsetted by the \glspl{nfe} seen during the \gls{ode} integration where \gls{tempo} takes only 560 evaluations to meet the same tolerances. Beyond these empirical measures, \gls{tempo} further benefits from its shared spatial Fourier layers. 
By folding the channel dimension and truncating higher modes, the spectral convolution scales as 
$O(N^2 \log N)$, in contrast to the naive $O(N^3 \log N)$ cost of a full 3D \gls{fft}. Also for reference, a \gls{vit} layer can scale as $O(N^4)$ in 2D~\ref{prop:sampler_lower}, higher than the quasi-quadratic cost of the \gls{fno}.

\vspace{-0.5em}
\begin{wraptable}[10]{l}{0.5\textwidth} 
\centering
\resizebox{\linewidth}{!}{
\begin{tabular}{lcccc}
\toprule
Model & Params & FLOPs & Mem (MB) & NFEs \\
\midrule
\gls{tempo}     & 0.49M  & 208M  & $\sim$50 & 560 \\
\gls{vit}    & 3.39M  & 10M   & $\sim$80 & 942 \\
U-Net           & 14.0M  & 555M  & $\sim$68 & 728 \\
\bottomrule
\end{tabular}
}
\caption{Model Complexity and Efficiency: \glspl{nfe} are averaged from sampling performed for~\cref{tab:nsv} for adaptive solver \texttt{dopri5} and tolerances of 1e-5.}
\end{wraptable}

\begin{wraptable}[3]{r}{0.45\textwidth} 
\vspace{-15em}
\centering
\begin{tabular}{llrr}
\toprule
Method & Seq & MSE & MSE/time \\
\midrule
\multirow{5}{*}{\gls{tempo}} 
 & 2  & 4.92e-02 & 2.70e-01 \\
 & 5  & 4.75e-02 & 3.41e-01 \\
 & 10 & 5.04e-02 & 4.94e-02 \\
 & 15 & 5.61e-02 & 3.83e-02 \\
 & 25 & 6.26e-02 & 4.22e-02 \\
\midrule
\multirow{5}{*}{\makecell{\gls{vit} \\ \tiny(\gls{affine})}}
 & 2  & 6.75e-02 & 2.71e-01 \\
 & 5  & 5.43e-02 & 3.59e-01 \\
 & 10 & 6.01e-02 & 1.49e-01 \\
 & 15 & 6.70e-02 & 4.53e-02 \\
 & 25 & 7.68e-02 & 8.56e-02 \\
\bottomrule
\end{tabular}
\caption{Ablation: Performance comparison scaling with sequence length.}
\label{tab:seq_len}
\end{wraptable}

\section{Limitations}
Flow matching models struggle with extreme data sparsity which can distort the distributions being learned, whereas hybrid models or models with explicitly defined conservations can fall back on injected physical knowledge. Additionally, similar to other generative models, adaptations, e.g. architectural modifications, would be necessary to extend the method towards a foundational model framework. Finally, while our stable and accurate 40-step forecasting represents the longer end time horizons, it remains an open question on how to forecast for much longer timeframes. Critical applications in science and engineering would require further study both experimentally and theoretically to establish statistically reliable forecasting.

\section{Conclusions and Further work}

In this work, we addressed the challenge of long-horizon \gls{pde} forecasting via our proposed method \gls{tempo}. \Gls{tempo} consistently outperformed state-of-the-art baselines across three benchmark \gls{pde} datasets and achieves stable long-horizon 40 step forecasts with remarkable accuracy to the true trajectories as well as superior spectral fidelity. The modified time-conditioned \gls{fno} is parameter-efficient while improving the capture of both local and global spectral modes, resulting in improvements in both data- and compute- efficiency. Additionally, we establish that \gls{fno} can achieve an upper bound on approximation error that sampler-based architectures cannot reach without significantly more parameters, Corollary~\ref{cor:efficiency}.
These results highlight the importance of architectures that align with the continuous nature of \gls{pde} dynamics, enabling not only improved predictive accuracy but also physically consistent, long-horizon trajectories.

Consequently, \gls{tempo} poses significant opportunity for further work in this field. Under typical real-world environments, \gls{pde} observations may come from irregularly sampled domains; since our method already demonstrates state-of-the-art generations using a simple \gls{ae} and the latent time-conditioned \gls{fno} which no longer relies on a regular grid as is a limitation of the original \gls{fno}~\citep{liFourierNeuralOperator2021}, one extension of our work is to then extend our method to real-world settings to forecast \gls{pde} over irregular domains.


\bibliography{FNOFlow}
\bibliographystyle{iclr2026_conference}

\newpage
\appendix

\section{Proofs}
\label{app:proofs}

 \begin{proof}[Proof of Theorem~\ref{thm:FNO_upper}]
\textit{Step 1: Spectral truncation.}
By assumption the Fourier coefficients of $\mathcal G(u)$ satisfy
\[
\big|\widehat{\mathcal G(u)}(k)\big|\;\le\;C_\lambda(1+|k|)^{-p}, 
\qquad \forall u\in\mathcal U,\;\;k\in\mathbb Z^d.
\]
If we keep only the modes $|k|\le K$ and set
\[
\mathcal G_K(u)(x) \;:=\; \sum_{|k|\le K}\widehat{\mathcal G(u)}(k)e^{ik\cdot x},
\]
then the error lives in the high modes:
\[
\|\mathcal G(u)-\mathcal G_K(u)\|_{H^{s'}}^2
   \;=\;\sum_{|k|>K}(1+|k|^2)^{s'}\big|\widehat{\mathcal G(u)}(k)\big|^2.
\]
Using the decay bound gives
\[
\|\mathcal G(u)-\mathcal G_K(u)\|_{H^{s'}}^2
   \;\le\;C_\lambda^2\sum_{|k|>K}(1+|k|)^{2(s'-p)}.
\]
A standard counting argument (comparing the lattice sum with a radial integral)
shows this tail is $\lesssim K^{-2\alpha}$, with
\[
\alpha \;:=\; p-s'-\tfrac d2 \;>\;0.
\]
This is exactly the pseudo-spectral tail estimate also used in \cite[Thm.~40]{kovachkiUniversalApproximationError2021}.  
Hence choosing
\[
K\;\asymp\;\varepsilon^{-1/\alpha}
\]
ensures $\|\mathcal G-\mathcal G_K\|_{H^{s'}}\le \varepsilon/2$.  

\medskip
\textit{Step 2: Reduction to a finite-dimensional map.}
The truncated operator $\mathcal G_K$ is determined by finitely many Fourier coefficients
$\{\widehat{\mathcal G(u)}(k)\}_{|k|\le K}$, with output dimension 
$m_{\rm out}\asymp K^d$. 
To apply a neural network, we also restrict the input to finitely many low modes.
By compactness of $\mathcal U\subset H^s$ and continuity of the projection $P_M$,
there exists $M$ such that
\[
\|\mathcal G_K(u)-\mathcal G_K(P_Mu)\|_{H^{s'}}\;\le\;\varepsilon/6
\qquad \forall u\in\mathcal U.
\]
This is the same finite-dimensional reduction used in the universal approximation
argument of \cite[Thm.~15]{kovachkiUniversalApproximationError2021}.  
Thus it suffices to approximate the finite-dimensional continuous map
\[
F:\;(\widehat u(k))_{|k|\le M}\;\longmapsto\;
(\widehat{\mathcal G(u)}(k))_{|k|\le K},
\]
between compact subsets of Euclidean spaces.

\medskip
\textit{Step 3: Approximation of the finite map.}
Classical universal approximation theorems (and the constructive $\Psi$--\gls{fno}
realization in \cite[Def.~11, Thm.~15]{kovachkiUniversalApproximationError2021}) ensure that for any
desired accuracy $\delta>0$, one can build a neural network (or FNO block)
approximating $F$ uniformly to error $\delta$ on each retained coefficient.  
To control the $H^{s'}$--norm it suffices to achieve coefficient accuracy
\[
\delta\;\lesssim\;\frac{\varepsilon}{K^{s'+d/2}}.
\]
This choice ensures
\(\|P_K\mathcal G(u)-\widetilde{\mathcal G}_\theta(u)\|_{H^{s'}}\le \varepsilon/3\).
Constructive approximation bounds then give a parameter count
\[
P\;\lesssim\;K^d \cdot \mathrm{polylog}(1/\varepsilon),
\]
where the extra logarithmic factor reflects standard overheads in
coefficient quantization and network approximation \cite[Remark~22]{kovachkiUniversalApproximationError2021}.

\medskip
\textit{Step 4: Assemble errors and conclude.}
Adding the contributions:
- spectral truncation error $\le\varepsilon/2$ (Step~1),
- input-projection error $\le\varepsilon/6$ (Step~2),
- finite-map approximation error $\le\varepsilon/3$ (Step~3),

we obtain
\[
\sup_{u\in\mathcal U}\|\mathcal G(u)-\mathcal G_\theta(u)\|_{H^{s'}}
   \;\le\;\varepsilon.
\]
Substituting $K\asymp\varepsilon^{-1/\alpha}$ into the parameter bound gives
\[
P_{\mathrm FNO}(\varepsilon)\;\lesssim\;\varepsilon^{-d/\alpha},
\]
up to the mild logarithmic factors discussed above.
\end{proof}

\begin{proof}[Proof of Proposition~\ref{prop:sampler_lower}]
\textit{Step 1: Finite-dimensional subspace and sampling.}  
Consider the $K$-mode Fourier subspace
\[
V_K := \operatorname{span}\{e^{ik\cdot x} : |k|\le K\} \subset L^2(\mathbb T^d),
\qquad \dim V_K =: D_K \asymp K^d.
\]
Any sampler-based learner observes an input $u\in V_K$ only through $n$ fixed points
$(u(x_1),\dots,u(x_n))$. This defines a linear map
\[
S: V_K \to  \mathbb{C}^n, \quad S(u)=(u(x_1),\dots,u(x_n)).
\]

\medskip\noindent
\textit{Step 2: Nyquist / injectivity argument.}  
To reconstruct all Fourier modes up to radius $K$, the sampling map $S$ must be
injective on $V_K$. In matrix terms, $S$ is represented by an $n\times D_K$
Vandermonde-like matrix. To have full rank $D_K$, we require
\[
n \;\ge\; D_K \;\asymp\; K^d.
\]
If $n<D_K$, there exists a nonzero $u\in V_K$ vanishing on all sample points,
so the learner cannot distinguish it from zero. This is the standard Nyquist/dimension-counting requirement: at least as many samples as degrees of freedom.

\medskip\noindent
\textit{Step 3: Parameter lower bound.}  
After sampling, the learner applies a parametric map $M: \mathbb{C}^n\to \mathbb{C}^m$ (e.g.,
a neural network) to produce either output samples or coefficients.  
To implement arbitrary linear transformations on the $D_K$ retained modes (e.g.,
arbitrary Fourier multipliers), the parametric map must have at least $D_K$ free parameters.  
For fully general dense linear maps (no structural constraints), one needs
\[
P \;\gtrsim\; D_K^2 \;\asymp\; K^{2d}.
\]

\medskip\noindent
\textit{Step 4: Conversion to accuracy $\varepsilon$.}  
From the FNO upper bound analysis, achieving accuracy $\varepsilon$ requires
\[
K \;\asymp\; \varepsilon^{-1/\alpha}, \qquad \alpha = p - s' - d/2 >0.
\]
Substituting this into the previous bounds gives the scaling
\[
n \;\gtrsim\; \varepsilon^{-\,d/\alpha}, \qquad 
P_{\rm sampler}(\varepsilon)\;\gtrsim\;\varepsilon^{-\,\beta d/\alpha},
\]
with $\beta=1$ for minimal mode-wise maps and $\beta=2$ for fully dense maps.  

\medskip\noindent
\textit{Step 5: Conclusion.}  
Hence any sampler-based architecture that must reconstruct all modes up to radius
$K$ requires asymptotically more parameters than an \gls{fno} whenever $\beta>1$, justifying
the lower bound in the proposition.
\end{proof}

\section{Flow Matching Background}\label{app:fm}
\paragraph{Flow matching} The core idea of flow matching is to learn a time-dependent velocity field, $v_\theta(z,t)$, which defines an \gls{ode} in the latent space:
\begin{equation}
\frac{dz(t)}{dt} = v_\theta(z(t),t), \quad z(0) \sim \pi_0,
\end{equation}
where $\pi_0$ is a simple reference distribution (e.g., Gaussian). Integrating this \gls{ode} transports samples to the latent data distribution $\pi_1$, such that $z(1) \sim \pi_1$ and $p_1(z) \approx f_\phi \# \mathcal{D}_{\text{data}}$, 
where $f_\phi \# \mu$ denotes the pushforward measure of a distribution $\mu$ under $f_\phi$, i.e., 
$(f_\phi \# \mu)(A) = \mu(f_\phi^{-1}(A))$ for measurable sets $A$. The corresponding time-dependent probability density, $p_t(z)$, evolves according to the continuity equation:
\begin{equation}
\frac{\partial p_t(z)}{\partial t} + \nabla_z \cdot \big( p_t(z) \, v_\theta(z,t) \big) = 0.
\end{equation}
In practice, the target velocity field $u(t,z)$ and the full marginal density $p_t(z)$ are generally unknown and intractable. Flow matching sidesteps this issue by directly supervising the model to match the instantaneous vector field along interpolating paths between the reference $\pi_0$ and the target $\pi_1$, allowing for deterministic, efficient sampling. Different choices of paths lead to different training dynamics and inductive biases, as they implicitly define the target velocity field $u(t,z)$ that the model regresses against.

Integrating this ODE from $t=0$ to $t=1$ transports the reference distribution $\pi_0$ to the latent data distribution $\pi_1$, so that $z(1) \sim \pi_1$ and $p_1(z) \approx f_\phi \# \mathcal{D}_{\text{data}}$.

\paragraph{Latent Flow Matching.} 
We now instantiate the general flow matching framework in the latent space. Let $z_\tau = f_\phi(x_\tau)$ for $\tau = 1,\dots,m$, where $f_\phi$ is a pretrained encoder mapping from the data space to the lower-dimensional latent space. Our objective is to approximate the ground-truth latent distribution $q(z_\tau \mid x_1,\dots,x_{\tau-1})$ by a parametric distribution $p(z_\tau \mid z_{\tau-1})$, which can later be decoded back to the data space via $x_\tau = g_\psi(z_\tau)$ using a decoder $g_\psi$. 

The latent dynamics can be expressed by the ODE:
\begin{equation}
\dot{z}_t = u_t(z_t),
\end{equation}
where $u_t$ denotes the (true) time-dependent velocity field. Learning these dynamics amounts to approximating $u_t$ with a neural parameterization. Following the flow matching framework, we introduce a model velocity field $v_\theta: [0,1] \times \mathbb{R}^Z \to \mathbb{R}^Z$ and consider the ODE
\begin{equation}
\dot{\phi}_t(z) = v_\theta(\phi_t(z), t), \quad \phi_0(z) = z,
\end{equation}
which defines a time-dependent diffeomorphism $\phi_t$ pushing forward an initial reference distribution $p_0$ (often chosen as $\mathcal{N}(0,I)$) to a target distribution $p_1 \approx q$ along the density path $p_t$:
\begin{equation}
p_t = (\phi_t)_\# p_0,
\end{equation}
where $(\cdot)_\#$ denotes the pushforward. In other words, the goal of flow matching is to learn a deterministic coupling between $p_0$ and $q$ by training $v_\theta$ so that the solution satisfies $z_0 \sim p_0$ and $z_1 \sim q$.

Given a probability path $p_t$ and its associated velocity field $u_t$, flow matching reduces to a least-squares regression problem:
\begin{equation}
\mathcal{L}_{\mathrm{FM}}(\theta) = \mathbb{E}_{t \sim U[0,1],\, z \sim p_t} \; \omega(t) \, \| v_\theta(z,t) - u_t(z) \|_2^2,
\end{equation}
where $\omega(t) > 0$ is a weighting function, often taken as $\omega(t) = 1$ (Lipman et al., 2022). This formulation ensures that the learned velocity field aligns with the target field $u_t$ at all times, thereby generating the desired marginal probability path.

\section{Fourier Neural Operator Background}\label{app:fno}

An \gls{fno} is designed to learn a mapping between function spaces, rather than between finite-dimensional vectors. Consider a function $u:\mathbb{R}^d \rightarrow \mathbb{R}^{c}$ representing data, for example in $\mathbb{R}^X$, with samples $x \in \mathbb{R}^X$. Then, an \gls{fno} parameterizes an operator  as
\[
\mathcal G_\theta: u \mapsto \tilde{u}, \quad \tilde{u}: \mathcal{D} \to \mathbb{R}^{c_\text{out}},
\]
that maps $u$ to an output function $\tilde{u}$ (e.g., a solution field of a \gls{pde} or a transformed spatial signal).

This mapping is implemented via iterative Fourier layers which perform spectral transformations of the input:
\begin{equation}
\hat{u}(k) = \mathcal{F}[u](k), \quad \hat{\tilde{u}}(k) = R_\theta(k) \cdot \hat{u}(k),
\end{equation}
followed by an inverse Fourier transform back to the spatial domain:
\begin{equation}
\tilde{u}(x) = \mathcal{F}^{-1}[\hat{\tilde{u}}](x),
\end{equation}
with $R_\theta(k)$ being learnable Fourier-mode weights and $\mathcal{F}$ denoting the Fourier transform. This spectral representation allows the \gls{fno} to efficiently capture long-range dependencies and global correlations in the data.

\section{Autoencoder Details}\label{app:ae}

Residual blocks throughout the architecture consist of two $3 \times 3$ convolutions with ReLU activation and group normalization (8 groups) in between, with the input added back to the output. 
Attention blocks are implemented using PyTorch’s \texttt{nn.MultiheadAttention}, with embeddings reshaped from $[B, C, H, W]$ to $[B, HW, C]$.

The autoencoder is initialised with a depth of $d=2$ resulting in a factor $2^d=4$ compression for all datasets.

\section{Model hyperparameters}\label{app:modelhypers}
We initialised the probability paths with the following hyperparameters. \gls{river} was defined with variance parameters $\sigma = 0.1$ and $\sigma_{\min} = 10^{-7}$. \gls{slp} used $\sigma = 0.1$ and $\sigma_{\min} = 0.01$. 
We further considered the \gls{ve} path with $\sigma_{\min} = 0.01$ and $\sigma_{\max} = 0.1$ and the \gls{vp} path initialized with $\beta_{\min} = 0.1$ and $\beta_{\max} = 20.0$ per~\citep{limElucidatingDesignChoice2025}.

We provide details for the vector field regressors' width and depth hyperparameters as per~\cref{tab:modelparams}.
\begin{table}[h!]
\centering
\begin{tabular}{l l l}
\toprule
\textbf{Model} & \textbf{Parameter} & \textbf{Value} \\
\midrule
\multirow{4}{*}{\gls{tempo}} 
 & $n_{\text{modes}}$ & 20 \\
 & Hidden channels & 64 \\
 & Projection channels & 64 \\
 & Depth & 4 \\
\midrule
\multirow{4}{*}{U-Net} 
 & Hidden channels & 64 \\
 & Attention resolutions & (1, 2, 2) \\
 & Channel multiplier & (1, 2, 4) \\
 & Depth & 3 \\
\midrule
\multirow{4}{*}{\gls{vit}} 
 & Hidden channels & 256 \\
 & Depth & 4 \\
 & Mid-depth & 5 \\
 & Output normalization & LayerNorm \\
\bottomrule
\end{tabular}
\caption{Descriptions of hyperparameters across \gls{tempo}, U-Net, and \gls{vit} architectures.}
\label{tab:modelparams}
\end{table}

\section{Dataset Details}
\label{app:datasets}

\begin{table}[h!]
\centering
\caption{Dataset sizes and trajectory lengths used in evaluation.}
\label{tab:datasets}
\begin{tabular}{lcc}
\toprule
Dataset & \# Trajectories & Timeseries Length \\
\midrule
\gls{swe} & 1000 & 100 \\
\gls{rd}  & 1000 & 100 \\
\gls{nsv} & 5000 & 50  \\
\bottomrule
\end{tabular}
\end{table}

\emph{\Acrfull{swe}}

The \glspl{swe} are derived from the compressible Navier--Stokes equations and model free-surface flow problems in 2D. The system of hyperbolic PDEs is given by:

\begin{align}
\partial_t h + \partial_x (h u) + \partial_y (h v) &= 0, \\
\partial_t (h u) + \partial_x \Big(u^2 h + \frac{1}{2} g_r h^2\Big) + \partial_y (u v h) &= - g_r h \, \partial_x b, \\
\partial_t (h v) + \partial_y \Big(v^2 h + \frac{1}{2} g_r h^2\Big) + \partial_x (u v h) &= - g_r h \, \partial_y b,
\end{align}

where $u, v$ are the horizontal and vertical velocities, $h$ is the water height, $b$ represents spatially varying bathymetry, and $g_r$ is gravitational acceleration. The quantities $h u$ and $h v$ correspond to directional momentum components.

The dataset (~\citep{takamotoPDEBENCHExtensiveBenchmark2022}) simulates a 2D radial dam break scenario on a square domain $\Omega = [-2.5, 2.5]^2$. The initial water height is a circular bump in the center of the domain:

\[
h(t=0, x, y) =
\begin{cases}
2.0, & \text{if } r < r_0, \\
1.0, & \text{if } r \ge r_0,
\end{cases}
\quad r = \sqrt{x^2 + y^2}, \quad r_0 \sim \mathcal{U}(0.3, 0.7).
\]

\emph{\Acrfull{rd}}

The \gls{rd} dataset models two non-linearly coupled variables: the activator $u = u(t,x,y)$ and the inhibitor $v = v(t,x,y)$. The system of \glspl{pde} is:

\begin{align}
\partial_t u &= D_u \, \partial_{xx} u + D_u \, \partial_{yy} u + R_u(u,v), \\
\partial_t v &= D_v \, \partial_{xx} v + D_v \, \partial_{yy} v + R_v(u,v),
\end{align}

where $D_u$ and $D_v$ are diffusion coefficients, and $R_u(u,v)$, $R_v(u,v)$ are the reaction functions. Specifically, the FitzHugh–Nagumo model defines the reactions as:

\begin{align}
R_u(u,v) &= u - u^3 - k - v, \\
R_v(u,v) &= u - v,
\end{align}

with $k = 5 \times 10^{-3}$, $D_u = 1 \times 10^{-3}$, and $D_v = 5 \times 10^{-3}$.  

The dataset (~\citep{takamotoPDEBENCHExtensiveBenchmark2022}) uses a simulation domain $x,y \in (-1,1)$ and $t \in (0,5]$ with initial condition set as standard normal random noise: $u(0,x,y) \sim \mathcal{N}(0,1.0)$.

\emph{\Acrfull{nsv}}

The \gls{nsv} (~\citep{liFourierNeuralOperator2021}) models 2D incompressible fluid flow on the unit torus. The system of equations is:

\begin{align}
\partial_t w(x,t) + u(x,t) \cdot \nabla w(x,t) &= \nu \, \Delta w(x,t) + f(x), \quad x \in (0,1)^2, \, t \in (0,T], \\
\nabla \cdot u(x,t) &= 0, \\
w(x,0) &= w_0(x),
\end{align}

where $w(x,t)$ is the vorticity, $u(x,t)$ is the velocity field, $\nu$ is viscosity, and $f(x)$ is a fixed forcing term:

\[
f(x) = 0.1 \Big( \sin(2 \pi (x_1 + x_2)) + \cos(2 \pi (x_1 + x_2)) \Big).
\]

The initial condition is sampled from a Gaussian measure: 

\[
w_0 \sim \mu, \quad \mu = \mathcal{N}\Big(0, \, \bigl(-\Delta + 49 I\bigr)^{-2.5} \, 7^{3/2}\Big),
\]

with periodic boundary conditions.

\section{Spectral Analysis of Ground Truth \gls{nsv}}
\label{app:spectral_ground_truth}
\begin{figure}[!htb]
    \centering
    \includegraphics[width=0.6\linewidth]{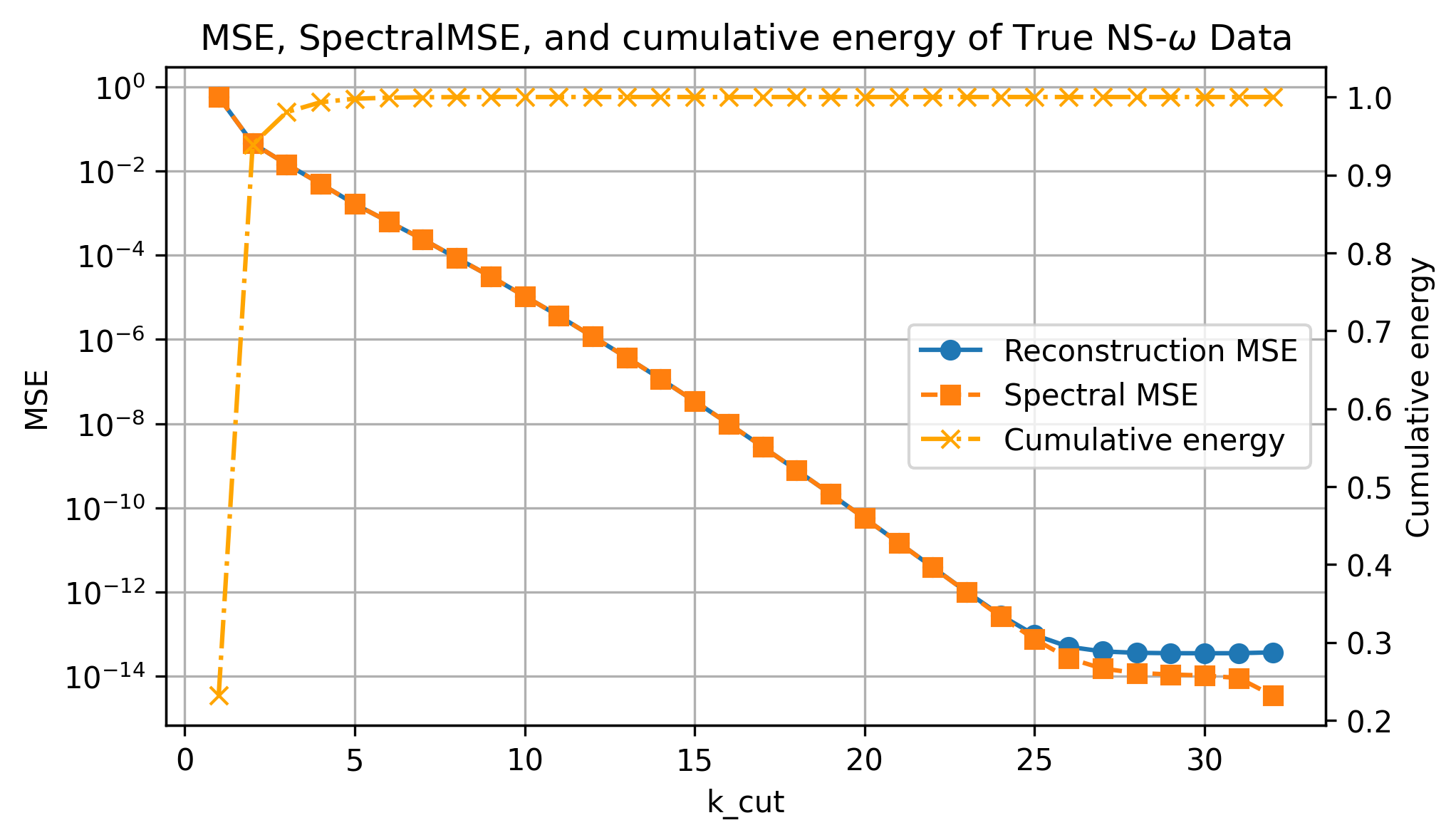}
    \caption{\textbf{Spectral Analysis of True Vorticity}: Reconstruction MSE, spectral MSE, and cumulative enstrophy fraction 
of true Navier--Stokes vorticity data as functions of cutoff wavenumber $k_{\mathrm{cut}}$. }
    \label{fig:spectral_analysis_ground_truth}
\end{figure}

\cref{fig:spectral_analysis_ground_truth} shows how the quality of spectral truncations of the true Navier--Stokes vorticity field depends on the cutoff wavenumber $k_{\mathrm{cut}}$. 
Given the full Fourier spectrum $\hat{\omega}(k_x,k_y)$, we apply a mask that retains only modes with $|k_x|+|k_y| \leq k_{\mathrm{cut}}$, reconstruct the signal by inverse FFT, and compute three quantities as functions of $k_{\mathrm{cut}}$:
\begin{enumerate}
    \item Reconstruction MSE: the mean squared error between the original and truncated fields in physical space.
    \item Spectral MSE: the mean squared error in Fourier space, quantifying lost spectral content.
    \item Cumulative energy fraction: the fraction of total energy $\sum |\hat{\omega}|^2$ retained by the truncated spectrum.
\end{enumerate}
As $k_{\mathrm{cut}}$ increases, both reconstruction and spectral errors decrease, while the retained energy approaches unity.

\newpage
\section{Extended Results for Navier–Stokes Vorticity}\label{app:nsv}

\begin{figure}[!htb]
    \centering
    \includegraphics[width=0.8\linewidth]{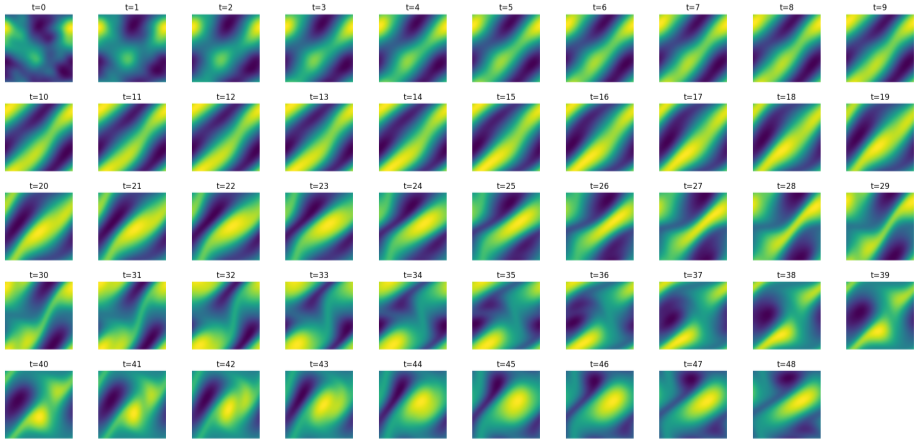}
    \caption{\textbf{Navier–Stokes Vorticity (Original).} Ground-truth timeseries across 40 timesteps.}
    \label{fig:nsv_original}
\end{figure}

\begin{figure}[!htb]
    \centering
    \includegraphics[width=0.8\linewidth]{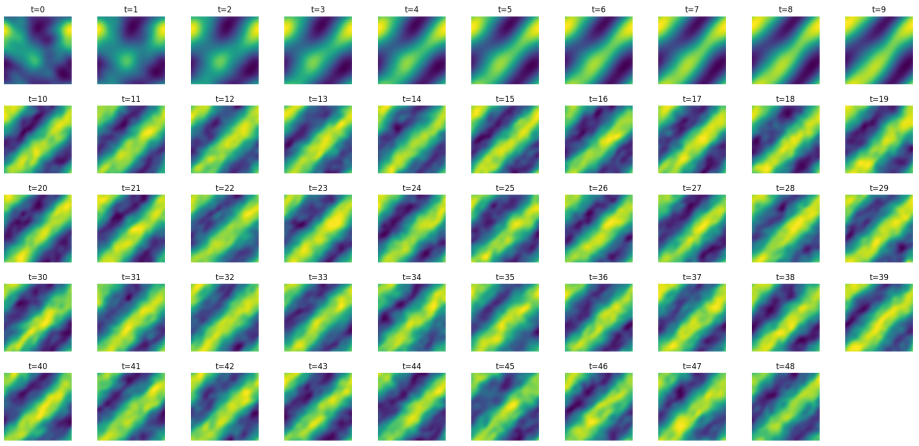}
    \caption{\textbf{Navier–Stokes Vorticity (U-Net).} Forecasted timeseries across 40 timesteps.}
    \label{fig:nsv_unet}
\end{figure}

\begin{figure}[!htb]
    \centering
    \includegraphics[width=0.8\linewidth]{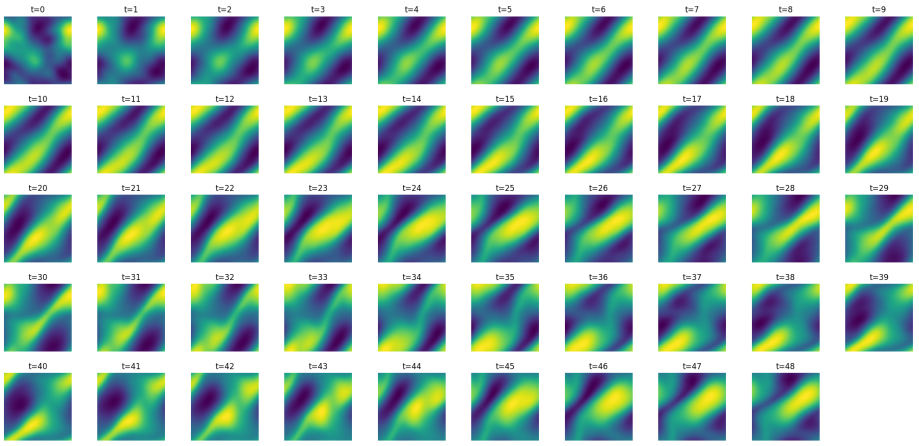}
    \caption{\textbf{Navier–Stokes Vorticity (\gls{vit}).} Forecasted timeseries across 40 timesteps.}
    \label{fig:nsv_transformer}
\end{figure}

\begin{figure}[!htb]
    \centering
    \includegraphics[width=0.8\linewidth]{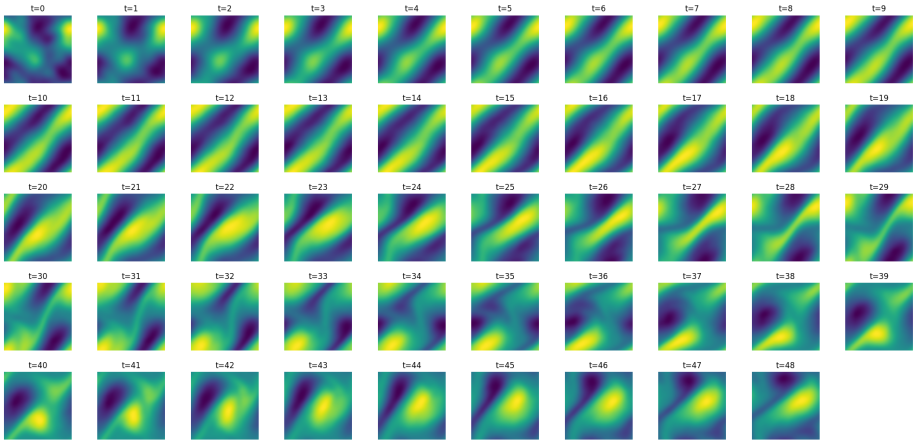}
    \caption{\textbf{Navier–Stokes Vorticity (\gls{tempo}).} Forecasted timeseries across 40 timesteps.}
    \label{fig:nsv_fno}
\end{figure}

\newpage

\section{Extended Results for Shallow Water Equation}\label{app:swe}

\begin{table}[h!]
\centering
\begin{tabular}{llllllll}
\toprule
Regressor & Path & MSE $\downarrow$ & SpectralMSE $\downarrow$ & RFNE $\downarrow$ & PSNR $\uparrow$ & Pearson $\uparrow$ & SSIM $\uparrow$ \\
\midrule
\multirow{4}{*}{\gls{tempo}} 
& \acrshort{affine} & \textbf{6.64e-05} & \textbf{5.65e-05} & \textbf{7.64e-03} & \textbf{46.5} & \textbf{0.998} & \textbf{0.997} \\
& \acrshort{river}  & 4.04e-04 & 2.33e-04 & 1.89e-02 & 38.7 & 0.989 & 0.976 \\
& \acrshort{ve}     & 9.37e-04 & 8.22e-04 & 2.89e-02 & 35.2 & 0.994 & 0.977 \\
& \acrshort{vp}     & 4.41e-03 & 2.51e-03 & 4.31e-02 & 28.3 & 0.872 & 0.857 \\
\midrule
\multirow{5}{*}{\gls{vit}} 
& \acrshort{affine} & 9.59e-05 & 7.93e-05 & 9.06e-03 & 44.9 & 0.997 & 0.995 \\
& \acrshort{vp}     & 1.30e-04 & 8.81e-05 & 1.05e-02 & 43.6 & 0.996 & 0.993 \\
& \acrshort{river}  & 2.99e-04 & 1.67e-04 & 1.63e-02 & 40.0 & 0.992 & 0.981 \\
& \acrshort{slp}\footnote{blahblah} & 6.60e-04 & - & 1.28e-01 & 36.1 & - & 0.93 \\
& \acrshort{ve}     & 1.28e-03 & 1.01e-03 & 3.38e-02 & 33.7 & 0.985 & 0.960 \\
\midrule
\multirow{3}{*}{U-Net} 
& \acrshort{vp}     & 1.37e-02 & 8.26e-03 & 1.10e-01 & 23.4 & 0.546 & 0.627 \\
& \acrshort{river}  & 1.61e-02 & 1.00e-02 & 1.20e-01 & 22.7 & 0.437 & 0.610 \\
& \acrshort{affine} & 1.68e-02 & 1.01e-02 & 1.22e-01 & 22.5 & 0.435 & 0.593 \\
\bottomrule
\end{tabular}
\caption{Comparison of \gls{tempo}, U-Net, and \gls{vit} models under different probability paths for the \gls{swe}. The best value for each metric is highlighted in bold.}
\end{table}

\begin{figure}[!htb]
    \centering
    \includegraphics[width=0.8\linewidth]{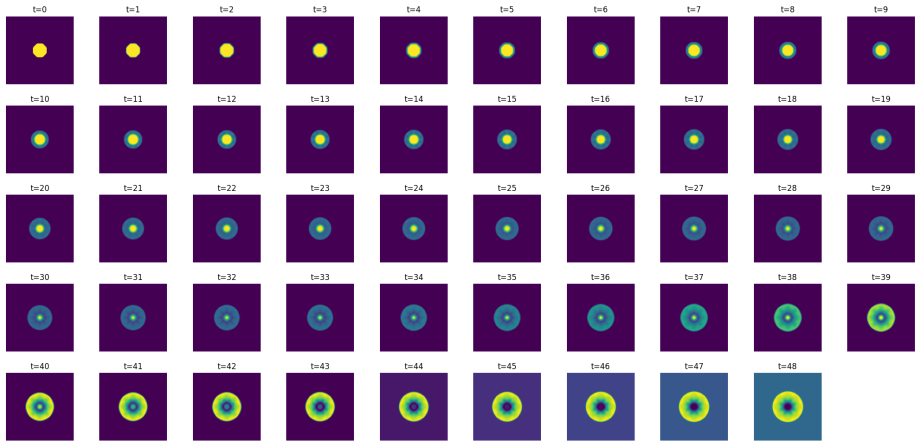}
    \caption{\textbf{SWE (Original).} Ground-truth timeseries across 40 timesteps.}
    \label{fig:swe_original}
\end{figure}

\begin{figure}[!htb]
    \centering
    \includegraphics[width=0.8\linewidth]{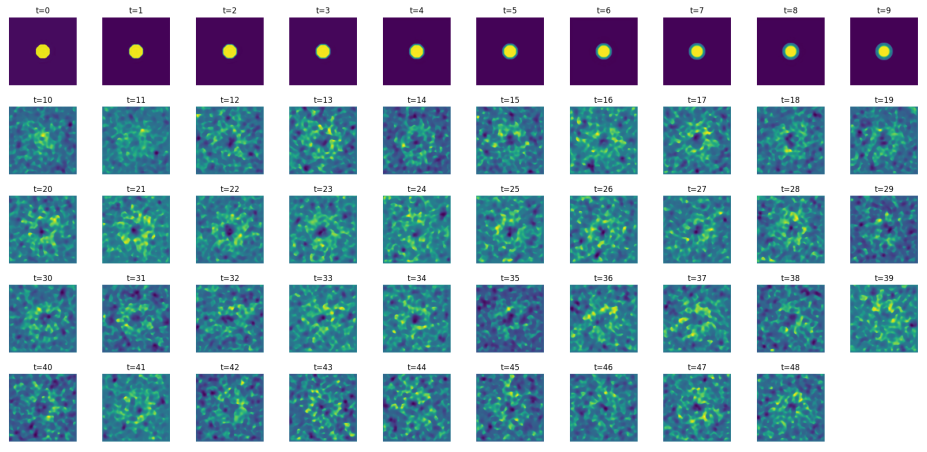}
    \caption{\textbf{SWE (U-Net).} Forecasted timeseries across 40 timesteps.}
    \label{fig:swe_unet}
\end{figure}

\begin{figure}[!htb]
    \centering
    \includegraphics[width=0.8\linewidth]{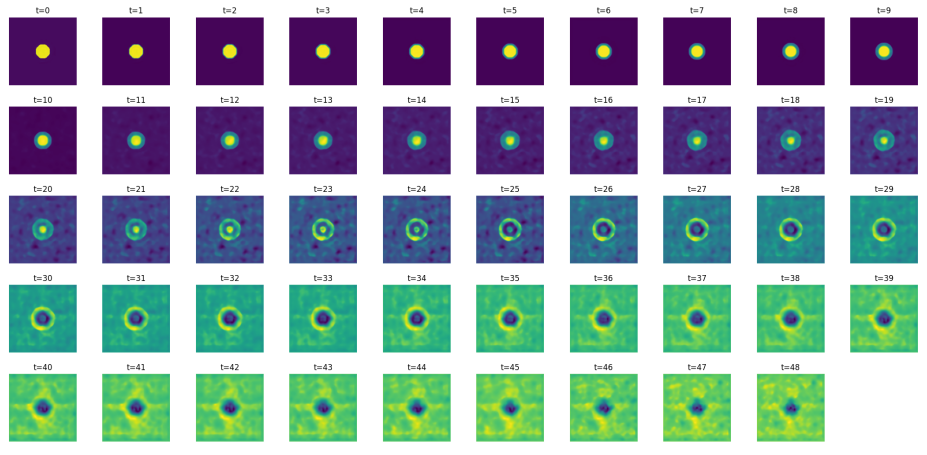}
    \caption{\textbf{SWE (\gls{vit}).} Forecasted timeseries across 40 timesteps.}
    \label{fig:swe_transformer}
\end{figure}

\begin{figure}[!htb]
    \centering
    \includegraphics[width=0.8\linewidth]{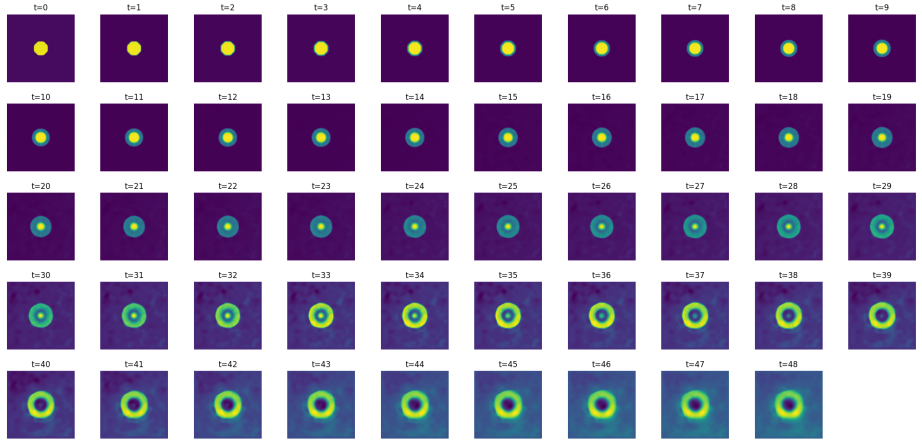}
    \caption{\textbf{SWE (\gls{tempo}).} Forecasted timeseries across 40 timesteps.}
    \label{fig:swe_fno}
\end{figure}

\newpage
\section{Extended Results for 2D Reaction Diffusion}\label{app:rd}

\begin{table}[h!]
\centering
\begin{tabular}{llllllll}
\toprule
Regressor & Path & MSE $\downarrow$ & SpectralMSE $\downarrow$ & RFNE $\downarrow$ & PSNR $\uparrow$ & Pearson $\uparrow$ & SSIM $\uparrow$ \\
\midrule
\multirow{4}{*}{\gls{tempo}} 
& \acrshort{affine} & \textbf{2.76e-05} & \textbf{2.18e-05} & \textbf{3.29e-02} & \textbf{65.7} & \textbf{1.000} & \textbf{0.999} \\
& \acrshort{river}  & 9.36e-04 & 5.47e-04 & 2.08e-01 & 50.4 & 0.975 & 0.978 \\
& \acrshort{ve}     & 1.58e-03 & 1.38e-03 & 2.70e-01 & 48.2 & 0.990 & 0.977 \\
& \acrshort{vp}     & 1.24e-02 & 1.01e-02 & 4.95e-01 & 39.2 & 0.714 & 0.862 \\
\midrule
\multirow{4}{*}{\gls{vit}} 
& \acrshort{affine} & 6.30e-04 & 4.40e-04 & 1.67e-01 & 52.2 & 0.987 & 0.986 \\
& \acrshort{slp}\footnote{Results taken from experiments in~\citep{limElucidatingDesignChoice2025} over same dataset.} & 3.56e-04 & - & 1.16e-01 & 34.3 & - & 0.90 \\
& \acrshort{river}  & 1.00e-03 & 5.89e-04 & 2.16e-01 & 50.1 & 0.973 & 0.977 \\
& \acrshort{ve}     & 3.54e-03 & 2.23e-03 & 4.06e-01 & 44.7 & 0.915 & 0.946 \\
\midrule
\multirow{4}{*}{U-Net} 
& \acrshort{affine} & 3.09e-05 & 2.45e-05 & 3.57e-02 & 65.2 & 0.999 & \textbf{0.999} \\
& \acrshort{river}  & 1.02e-03 & 5.49e-04 & 2.17e-01 & 50.1 & 0.972 & 0.976 \\
& \acrshort{ve}     & 9.03e-03 & 6.07e-03 & 6.42e-01 & 40.6 & 0.820 & 0.860 \\
& \acrshort{vp}     & 2.09e-02 & 1.66e-02 & 6.81e-01 & 37.0 & 0.574 & 0.792 \\
\bottomrule
\end{tabular}
\caption{Comparison of \gls{tempo}, U-Net, and \gls{vit} models under different probability paths for the \gls{rd}. The best value for each metric is highlighted in bold.}
\end{table}

\begin{figure}[!htb]
    \centering
    \includegraphics[width=0.3\linewidth]{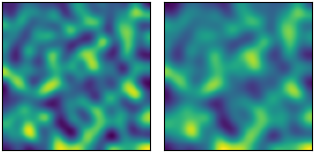}
    \caption{\textbf{Reaction Diffusion (Original).} Ground-truth end sample, from initial conditions of randomly sampled noise.}
    \label{fig:reacdiff_original}
\end{figure}

\begin{figure}[!htb]
    \centering
    \includegraphics[width=0.3\linewidth]{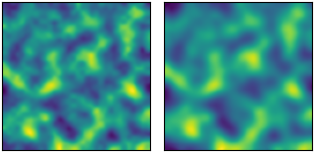}
    \caption{\textbf{Reaction Diffusion (U-Net).} Forecasted end sample, from initial conditions of randomly sampled noise.}
    \label{fig:reacdiff_unet}
\end{figure}

\begin{figure}[!htb]
    \centering
    \includegraphics[width=0.3\linewidth]{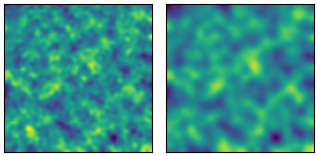}
    \caption{\textbf{Reaction Diffusion (\gls{vit}).} Forecasted end sample, from initial conditions of randomly sampled noise.}
    \label{fig:reacdiff_transformer}
\end{figure}

\begin{figure}[!htb]
    \centering
    \includegraphics[width=0.3\linewidth]{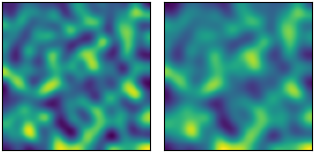}
    \caption{\textbf{Reaction Diffusion (\gls{tempo}).} Forecasted end sample, from initial conditions of randomly sampled noise.}
    \label{fig:reacdiff_fno}
\end{figure}

\clearpage

\section{Extended Ablation Results}\label{app:ablation}
\begin{table}[h!]
\centering
\caption{Ablation over different training sequence lengths on the \gls{nsv} dataset. 
\gls{tempo} and the top performing alternative are trained while varying sequence lengths and evaluated on $10$ timesteps to predict the next step.}
\label{tab:seq_length_ablation}
\begin{tabular}{llrrrrrrrrr}
\toprule
Method & Seq.~Len. & MSE & DensityMSE & SpectralMSE & RFNE & PSNR & Pearson & SSIM & NFE \\
\midrule
\multirow{5}{*}{\gls{tempo}} 
 & 3  & 4.924e-02 & 7.685e-05 & 3.531e-02 & 2.328e-01 & 25.769 & 0.973 & 0.803 & 74 \\
 & 6  & 4.753e-02 & 1.133e-04 & 3.394e-02 & 2.276e-01 & 25.923 & 0.974 & 0.800 & 608 \\
 & 11 & 5.036e-02 & 1.055e-04 & 3.620e-02 & 2.352e-01 & 25.672 & 0.972 & 0.800 & 842 \\
 & 16 & 5.607e-02 & 1.282e-04 & 3.821e-02 & 2.497e-01 & 25.205 & 0.969 & 0.786 & 938 \\
 & 26 & 6.255e-02 & 7.487e-05 & 3.726e-02 & 2.541e-01 & 24.730 & 0.968 & 0.765 & 1070 \\
\midrule
\multirow{5}{*}{\makecell{\gls{vit} \\ (\gls{affine})}}
 & 3  & 6.748e-02 & 1.414e-04 & 4.652e-02 & 2.678e-01 & 24.401 & 0.963 & 0.766 & 116 \\
 & 6  & 5.434e-02 & 1.239e-04 & 3.727e-02 & 2.416e-01 & 25.341 & 0.970 & 0.783 & 1766 \\
 & 11 & 6.014e-02 & 1.376e-04 & 4.067e-02 & 2.546e-01 & 24.901 & 0.967 & 0.777 & 1712 \\
 & 16 & 6.701e-02 & 1.093e-04 & 4.428e-02 & 2.680e-01 & 24.431 & 0.963 & 0.764 & 1622 \\
 & 26 & 7.682e-02 & 8.104e-05 & 4.468e-02 & 2.778e-01 & 23.838 & 0.960 & 0.741 & 1100 \\
\bottomrule
\end{tabular}
\end{table}

\begin{table}[h!]
\centering
\caption{Ablation of the \gls{tempo} model on the \gls{nsv} dataset by varying the number of modes. 
Models are trained with different numbers of Fourier modes and evaluated on 10 timesteps to predict the next step.}
\label{tab:tempo_modes_ablation}
\begin{tabular}{rllllllll}
\toprule
Modes & MSE & DensityMSE & SpectralMSE & RFNE & PSNR & Pearson & SSIM & NFE \\
\midrule
1  & 1.409e-01 & 1.075e-04 & 8.566e-02 & 3.947e-01 & 21.204 & 0.921 & 0.588 & 5798 \\
2  & 6.103e-02 & 8.928e-05 & 4.096e-02 & 2.596e-01 & 24.837 & 0.966 & 0.765 & 1688 \\
4  & 5.789e-02 & 8.361e-05 & 3.978e-02 & 2.538e-01 & 25.066 & 0.968 & 0.776 & 1058 \\
8  & 5.528e-02 & 8.498e-05 & 3.788e-02 & 2.481e-01 & 25.267 & 0.969 & 0.788 & 800 \\
16 & 5.471e-02 & 8.757e-05 & 3.742e-02 & 2.467e-01 & 25.312 & 0.970 & 0.787 & 884 \\
\bottomrule
\end{tabular}
\end{table}

\section{Use of Large Language Models (LLMs)}
We acknowledge the use of ChatGPT to make suggestions on how to polish the text, correct grammar, and ensure clarity in writing. No results, code, or data were created or altered by the model.

\end{document}